\newcommand{\R}{\mathbb{R}}
\crefname{prop}{proposition}{propositions}
\newcommand{\KL}{\mathrm{D}_{KL}}
\newtheorem{prop}{Proposition}
\newtheorem{lem}{Lemma}
\newtheorem{defi}{Definition}
\newcommand{\datafeatures}{\phi_X}
\newcommand{\Afeatures}{\phi_A}
\newcommand{\transpose}{^\mathrm{\textsf{\tiny T}}}
\newcommand{\Tr}{\textup{tr}}
\newcommand{\mcN}{\mathcal{N}}
\newcommand{\variationalfamily}{\mathcal{Q}}
 \newcommand{\rationals}{\mathbb{Q}}
 \newcommand{\naturals}{\mathbb{N}}
\newcommand{\algname}[1]{\textsc{#1}} 
\newcommand{\dsetname}[1]{\textsc{#1}}  
\title[Understanding VI in Function-Space]{Understanding Variational Inference in Function-Space}
\author{\Name{David R. Burt} \Email{drb62@cam.ac.uk}\\
\Name{Sebastian W. Ober} \Email{swo25@cam.ac.uk}\\
\Name{Adri\`{a} Garriga-Alonso} \Email{ag919@cam.ac.uk}\\
\addr Department of Engineering, University of Cambridge, UK
\AND
\Name{Mark {van der Wilk}} \Email{m.vdwilk@imperial.ac.uk}\\
\addr Department of Computing, Imperial College London, UK
}
\begin{document}
\maketitle
\thispagestyle{plain}

\begin{abstract}
Recent work has attempted to directly approximate the `function-space' or predictive posterior distribution of Bayesian models, without approximating the posterior distribution over the parameters. This is appealing in e.g.~Bayesian neural networks, where we only need the former, and the latter is hard to represent. In this work, we highlight some advantages and limitations of employing the Kullback-Leibler divergence in this setting. For example, we show that minimizing the KL divergence between a wide class of parametric distributions and the posterior induced by a (non-degenerate) Gaussian process prior leads to an ill-defined objective function. Then, we propose (featurized) Bayesian linear regression as a benchmark for `function-space' inference methods that directly measures approximation quality. We apply this methodology to assess aspects of the objective function and inference scheme considered in \citet*{sun2018functional}, emphasizing the quality of approximation to Bayesian inference as opposed to predictive performance.
\end{abstract}

\section{Introduction}\label{sec:intro}
While neural networks offer a successful parametric representation of functions, performing Bayesian inference over the parameters is challenging. Since only the predictions matter, a recent line of work has focused on directly approximating the posterior predictive distribution in `function-space' \citep{sun2018functional,ma2019variational,wang2019function}, with the intent of reducing the influence of the specific parameterization on the quality of inference. Similar to approximate Gaussian processes, a variational approach can be taken, where a discrepancy (e.g.~a KL divergence) is minimized between approximate and posterior predictive distributions \citep{alexander2016matthews}. However, we are dealing with measures on functions, so care is needed to ensure that the divergence is well-defined and a useful objective function.

In this work, we investigate situations where this may not form a useful objective.  \Citet{sun2018functional} hint that: ``the function space KL divergence may be infinite, for instance if the prior assigns measure zero to the set of functions representable by a neural network''. We give examples of ill-defined objective functions that arise in the context of variational inference between approximate and exact predictive posteriors. We give a proof that variational inference using the KL divergence (or any $f$-divergence) does not lead to a sensible objective function when the prior is a non-degenerate Gaussian process and the variational family contains only `nice' parametric models (or vice versa). We show similar results for single hidden layer (1HL) Bayesian neural networks (BNNs) in the case when the prior has a different width than the approximate posterior and ReLU activation functions are used. Our proofs are contained in the appendix.

Having established that issues with the objective function can arise; we consider how to assess the quality of the approximation to the posterior achieved by methods motivated by performing variational inference in function-space. We propose Bayesian linear regression (BLR) as a benchmark for measuring the effectiveness of functional variational inference schemes. Since the exact posterior, as well as the KL divergence in function-space, can be computed in this case, we can directly assess the quality of \emph{approximate inference} separately from the quality of predictive performance, which may be good even in cases where inference is not accurate. We show how this benchmark can be used to decouple the impacts of further approximations to the objective function made by \citet{sun2018functional}. This approach gives a principled starting point for assessing future improvements in functional variational inference.

\section{Background}\label{sec:background}
We begin by introducing some measure theory notation, which is required to handle the distributions on functions that we perform inference with. We review the data processing inequality, which is needed for the proofs, and discuss the work of \citet{sun2018functional} which we analyze further in this work.

\paragraph{Notation}
A probability measure, $P$, is a function from subsets (`events') to $[0,1]$, such that the probability of the event $E$ is given by $P(E)$. In the case when the subsets are contained in $\R^k$, this can sometimes be related to a (Lebesgue) density $p:\R^k \to [0, \infty)$ by $P(E) = \int_{z\in E} p(z)dz$. A random variable, $Z$, is said to have distribution $P$ (and we write $Z \sim P$) if for all events $E$, $\mathrm{Prob}(Z \in E)=P(E)$. For measures defined on the same event space, we write $Q \ll P$ if for all $E$ such that $P(E)=0$, we have $Q(E)=0$. The \emph{Kullback-Leibler (KL) divergence} between two probability measures $Q$ and $P$ is given by, 
\begin{equation}
    \mathrm{D}_{KL}(Q, P) = \begin{cases}
                                \int \log \frac{dQ}{dP}dQ & Q \ll P, \\
                                \infty & \text{otherwise,}
                            \end{cases}
\end{equation}
where $\frac{dQ}{dP}$ denotes the Radon-Nikodym derivative, which is simply the ratio of the densities of these measures when the densities exist.\footnote{As we are often interested in the case when $Q$ and $P$ are defined on spaces without a Lebesgue measure, we use this more general formulation of KL divergence.} While we focus on the KL divergence in the main text due to its wide-spread use in variational inference, our results can be naturally extended to other $f$-divergences as described in \cref{app:f-divs}.

%%%%

\paragraph{The data processing inequality}
Given a random variable $Z \sim P$, we can transform it by a function $g$ to find a new random variable $g(Z)$. We refer to its distribution as the \emph{pushforward measure} of $P$, which we denote $g_* P$. The \emph{data processing inequality} states that if two random variables are transformed in this way, they cannot become easier to tell apart. 
 
\begin{prop}[Data processing inequality~{\citealp[Thm 6.2]{polyanskiy2014lecture}}]\label{prop:dpi}
  \hspace{-.3cm} Let $g$ be a measurable function, then $\mathrm{D}_{KL}\left(g_*P, g_*Q\right) \leq \mathrm{D}_{KL}\left(P, Q\right)$,
\end{prop}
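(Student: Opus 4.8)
The plan is to reduce the claim to a one-dimensional convexity estimate via Jensen's inequality. If $P \not\ll Q$ then $\KL(P,Q) = \infty$ and there is nothing to prove, so assume $P \ll Q$ and set $Z := \tfrac{dP}{dQ}$. First I would check that $g_*P \ll g_*Q$: if $(g_*Q)(A) = Q(g^{-1}(A)) = 0$ then $P(g^{-1}(A)) = 0$ by absolute continuity, i.e.\ $(g_*P)(A) = 0$; hence the left-hand side is also given by the integral formula.

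The key step is to identify the Radon--Nikodym derivative $W := \tfrac{d(g_*P)}{d(g_*Q)}$ on the target space with (a version of) the conditional expectation $\mathbb{E}_Q[Z \mid \sigma(g)]$ pulled back through $g$. Concretely, $\mathbb{E}_Q[Z \mid \sigma(g)]$ is $\sigma(g)$-measurable, so by the Doob--Dynkin lemma it equals $h \circ g$ for some measurable $h$ on the target space; and for every measurable set $A$ in the target space,
\[
  \int_A h \, d(g_*Q) = \int_{g^{-1}(A)} (h\circ g)\, dQ = \int_{g^{-1}(A)} Z \, dQ = P(g^{-1}(A)) = (g_*P)(A),
\]
so $h$ is a version of $W$. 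Now apply the conditional Jensen inequality to the convex function $\phi(t) = t \log t$ (with $\phi(0) := 0$), together with the change-of-variables formula $\int F \, d(g_*\mu) = \int (F \circ g)\, d\mu$ applied twice:
\begin{align*}
  \KL(g_*P, g_*Q) &= \int \phi(W)\, d(g_*Q) = \mathbb{E}_Q\!\left[\phi(W \circ g)\right] = \mathbb{E}_Q\!\left[\phi\!\left(\mathbb{E}_Q[Z \mid \sigma(g)]\right)\right] \\
  &\le \mathbb{E}_Q\!\left[\mathbb{E}_Q[\phi(Z) \mid \sigma(g)]\right] = \mathbb{E}_Q[\phi(Z)] = \KL(P,Q).
\end{align*}

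I expect the main obstacle to be the measure-theoretic bookkeeping around the identification of $W$ --- ensuring the conditional expectation is genuinely $\sigma(g)$-measurable and factors through $g$ --- and checking that integrability is not an issue (it is not: $\phi$ is bounded below, so all the integrals above are well-defined in $(-\infty, +\infty]$, and conditional Jensen applies to such integrands, with $Z$ itself $Q$-integrable since $\mathbb{E}_Q[Z]=1$). An alternative route that sidesteps the explicit Radon--Nikodym computation is the Donsker--Varadhan variational representation $\KL(\mu, \nu) = \sup_{f}\left(\int f \, d\mu - \log \int e^{f}\, d\nu\right)$, the supremum being over bounded measurable $f$ on the underlying space: composing any such test function on the target space with $g$ yields a valid test function on the source space, and since $\int f\,d(g_*\mu) = \int (f\circ g)\,d\mu$, the supremum defining $\KL(g_*P, g_*Q)$ is taken over a subclass of those appearing in $\KL(P,Q)$, which immediately gives the inequality. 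This second approach is shorter but presupposes the variational formula.
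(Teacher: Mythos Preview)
The paper does not prove this proposition at all; it is stated with a citation to \citet[Thm~6.2]{polyanskiy2014lecture} and used as a black box throughout. Your proof is correct: the reduction to the case $P\ll Q$, the verification that $g_*P\ll g_*Q$, the identification of $d(g_*P)/d(g_*Q)$ with a factorization through $g$ of the conditional expectation $\E_Q[Z\mid\sigma(g)]$ via Doob--Dynkin, and the application of conditional Jensen to $\phi(t)=t\log t$ are all sound, and your integrability remark (that $\phi$ is bounded below so everything lives in $(-\infty,+\infty]$) is adequate. The Donsker--Varadhan route you sketch is also valid. There is simply no paper-internal argument to compare against; what you have written is essentially the standard textbook proof that the cited reference gives (there in the more general setting of Markov kernels rather than deterministic maps).
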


\paragraph{Background on Functional Variational Inference}

We consider the application of variational inference to regression. In particular, we assume data $D=\{(x_n, y_n)\}_{n=1}^N$ has been observed, with $x_n \in \mathcal{X}=\R^d$ and $y_n \in \R$. We assume an additive noise model, i.e~$\hat{y}(x_*) = \hat{f}(x_*) + \epsilon_*$, where $\hat{f}$ is a stochastic process indexed by $\mathcal{X}$ and each $\epsilon_*$ is an independent mean-zero random variable. We assume a priori that $\hat{f} \sim P$. The goal is to approximate the \emph{posterior} distribution of $\hat{f}$ given the data $D$, $P_D$. Define $\ell_D: \R^\mathcal{X} \to \R$ to be the likelihood function given the observed data $D$.\footnote{Commonly this is written $\prod_{n=1}^N p(y_n|x_n,\hat{f})$.} Given an approximate distribution $Q$, variational inference \citep{blei2017variational} gives us the evidence lower bound (ELBO)
\begin{equation}\label{eqn:elbo}
    \log \int \ell_D dP  \geq \int \log \ell_D dQ - \mathrm{D}_{KL}(Q, P).
\end{equation}
 Maximizing the RHS of \cref{eqn:elbo} over $Q \in \variationalfamily$ is equivalent to minimizing $\mathrm{D}_{KL}(Q, P_D)$. Moreover, the RHS of \cref{eqn:elbo} can often be estimated: $\int \log \ell_D dQ$ can be estimated with Monte Carlo methods so long as we can evaluate $\ell_D$ and sample from $Q$. In the context of variational inference in parameter space, $Q$ is generally constrained to be from some family such that $\mathrm{D}_{KL}(Q, P)$ is tractable (e.g.~if $P$ is an isotropic Gaussian distribution, then $Q$ is often chosen to be Gaussian so that this KL divergence can be evaluated in closed form).

\Citet{sun2018functional} proposed using \cref{eqn:elbo} with $Q$ and $P$ the approximate predictive and prior predictive distributions, in which case $D_{KL}(Q, P)$ is a divergence between measures on the infinite product space $\R^\mathcal{X}$. The starting point for their work is:
\begin{prop}[{\citealt[Theorem 1]{sun2018functional}}]
For measures $Q,P$ on (the product $\sigma$-algebra of) $\R^\mathcal{X}$, 
\begin{equation}\label{eqn:fkl}
    \mathrm{D}_{KL}(Q, P) = \sup_{\mathrm{X}\subset \mathcal{X},\, |X|<\infty} \mathrm{D}_{KL}(Q_X, P_{X}),
\end{equation}
where $Q_X$, $P_X$ are the marginals of the measures $Q$ and $P$ on the set $X$.\footnote{Precisely, $Q_X = \pi_{X*}Q, P_X = \pi_{X*}P$, where $\pi_X$ denotes the canonical projection (\cref{def:canonical-pi}) onto $x \in X$.}
\end{prop}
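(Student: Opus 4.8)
The identity splits into two inequalities, one of which is routine. For ``$\geq$'', fix a finite $X \subset \mathcal{X}$. The canonical projection $\pi_X \colon \R^\mathcal{X} \to \R^X$ is measurable (this is built into the definition of the product $\sigma$-algebra), and by construction $Q_X = \pi_{X*}Q$ and $P_X = \pi_{X*}P$. Applying \cref{prop:dpi} with $g = \pi_X$ gives $\KL(Q_X, P_X) \le \KL(Q, P)$, and since this holds for every finite $X$ we may take the supremum on the left-hand side.

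The substance is the reverse inequality $\KL(Q, P) \le \sup_{|X|<\infty} \KL(Q_X, P_X)$. My plan is to use the variational characterisation of KL divergence as a supremum over finite measurable partitions,
\begin{equation*}
 \KL(Q, P) \;=\; \sup_{\{A_1, \dots, A_k\}} \; \sum_{i=1}^{k} Q(A_i) \log \frac{Q(A_i)}{P(A_i)},
\end{equation*}
where the supremum ranges over all finite measurable partitions of $\R^\mathcal{X}$, with the usual conventions $0\log 0 = 0$ and $q\log(q/0) = +\infty$ for $q>0$; this is a standard fact about the Kullback--Leibler divergence that I would cite rather than reprove. The point is then that it suffices to take the supremum over partitions into \emph{cylinder sets}: if each $A_i$ depends only on the coordinates in a finite set $X_i \subset \mathcal{X}$, then setting $X = \bigcup_i X_i$ (still finite) there is a partition $\{B_1, \dots, B_k\}$ of $\R^X$ with $A_i = \pi_X^{-1}(B_i)$, whence $Q(A_i) = Q_X(B_i)$, $P(A_i) = P_X(B_i)$, and the partition sum is bounded by $\KL(Q_X, P_X) \le \sup_{|X'|<\infty}\KL(Q_{X'}, P_{X'})$. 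So everything reduces to showing that restricting to cylinder partitions does not decrease the partition supremum, which is where the structure of the product $\sigma$-algebra enters.

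For that step I would exploit that the cylinder sets form an \emph{algebra generating} the product $\sigma$-algebra, together with a martingale approximation, using two structural facts. First, any set or function measurable for an uncountable product $\sigma$-algebra depends on only countably many coordinates; applying this to an event witnessing $Q \not\ll P$ if one exists, or otherwise to the density $f = dQ/dP$, reduces matters to a \emph{countable} coordinate set $S = \{s_1, s_2, \dots\}$, on which $\KL(Q_S, P_S) = \KL(Q, P)$ (in the non-absolutely-continuous case both are $\infty$; in the absolutely continuous case this is a change of variables, since $f$ is already $\pi_S^{-1}(\mathcal{B}(\R^S))$-measurable). Second, writing $X_n = \{s_1, \dots, s_n\}$ and $f_n = \E_P[f \vbar \pi_{X_n}^{-1}(\mathcal{B}(\R^{X_n}))]$, one has that $f_n$ is the pullback of $dQ_{X_n}/dP_{X_n}$, so $\int f_n \log f_n \, dP = \KL(Q_{X_n}, P_{X_n})$; by \cref{prop:dpi} this is non-decreasing in $n$ and, since $f_n \to f$ $P$-a.s.\ by martingale convergence and $t\mapsto t\log t$ is continuous and bounded below, Fatou gives $\liminf_n \int f_n \log f_n\,dP \ge \int f\log f\,dP$, so $\KL(Q_{X_n}, P_{X_n}) \uparrow \KL(Q, P)$ (equal to $+\infty$ when $Q \not\ll P$, since then $Q_{X_n}\not\ll P_{X_n}$ for $n$ large). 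Either way $\KL(Q, P) = \lim_n \KL(Q_{X_n}, P_{X_n}) \le \sup_{|X|<\infty} \KL(Q_X, P_X)$, which is what we wanted.

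The main obstacle is exactly this last convergence of finite-dimensional divergences up to the full divergence --- a statement about the behaviour of $f$-divergences under a refining sequence of $\sigma$-algebras --- combined with the ``depends on only countably many coordinates'' fact that allows one to replace the uncountable net of finite marginals by an increasing sequence. Both are classical, so in a full write-up I would state them precisely with citations and devote the effort to the bookkeeping instead: measurability of the projections, the cylinder-partition reduction, and the separate handling of the $Q \not\ll P$ case. The same scheme (partition characterisation plus \cref{prop:dpi}) adapts to general $f$-divergences, which is what is needed for \cref{app:f-divs}.
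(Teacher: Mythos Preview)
The paper does not prove this proposition at all: it is quoted verbatim as Theorem~1 of \citet{sun2018functional} and used as background, so there is no in-paper argument to compare against. Your sketch is the standard proof and is essentially correct; the ``$\geq$'' direction via \cref{prop:dpi} and the reduction to a countable coordinate set $S$ followed by a martingale/Fatou argument along $X_n=\{s_1,\dots,s_n\}$ are both sound.

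One step needs repair. In the $Q\not\ll P$ case you assert that ``$Q_{X_n}\not\ll P_{X_n}$ for $n$ large'', and this is false in general. Take $P=\bigotimes_{n\ge 1}\mcN(0,1)$ and $Q=\bigotimes_{n\ge 1}\mcN(\mu_n,1)$ with $\sum_n\mu_n^2=\infty$: by Kakutani's dichotomy $Q\perp P$, yet every finite marginal of $Q$ is equivalent to the corresponding marginal of $P$. The conclusion $\KL(Q_{X_n},P_{X_n})\to\infty$ is nevertheless true, and you can get it without leaving your martingale framework: the pulled-back densities $f_n$ form a nonnegative $P$-martingale with $\int f_n\,dP=1$; if $\sup_n\KL(Q_{X_n},P_{X_n})<\infty$ then $\sup_n\int f_n\log^+ f_n\,dP<\infty$, so by de~la~Vall\'ee--Poussin the $(f_n)$ are uniformly integrable and converge in $L^1(P)$ to a limit $f$ with $\int f\,dP=1$, forcing $Q\ll P$ on $\sigma(\pi_S)$ and hence $Q\ll P$. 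Contrapositively, $Q\not\ll P$ forces the finite-marginal KLs to diverge. Alternatively, the cleaner route you allude to---that in the partition characterisation of $\KL$ one may restrict to partitions drawn from any algebra generating the $\sigma$-algebra---handles both cases uniformly and avoids the split; citing that result (e.g.\ Gray, \emph{Entropy and Information Theory}) would make the write-up shorter.
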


In other words, the KL divergence between the stochastic processes is equal to the supremum of the KL divergence between the measures restricted to finite marginals. Substituting \cref{eqn:fkl} into \cref{eqn:elbo} introduces two sources of intractability. First, the supremum is over uncountably many subsets, and will be generally intractable. Second, the distributions $Q_X,P_X$ are often defined implicitly through a tractable sampling procedure, which does not provide closed form densities (with the notable exception when either $Q_X$ or $P_X$ is a Gaussian measure). 

\Citet{sun2018functional} propose replacing the supremum with an expectation, and using this in \cref{eqn:elbo} to address the first intractability. This involves defining a distribution over finite subsets of $\mathcal{X}$, e.g.~sampling points from the data as well as uniformly from a subset of $\mathcal{X}$, and using the KL divergence between the approximate posterior and the prior restricted to this index set.  The second intractability can be addressed using any form of implicit inference \citep{huszar2017variational}. \Citet{sun2018functional} use the spectral Stein gradient estimator (SSGE; \citet{shi2018spectral}) to obtain estimates of the gradient of the KL.

\section{Properties of the KL divergence in function-space}\label{sec:theorems}

In this section we discuss properties of KL divergences in function-space, noting that our results generalize with minor modifications to all $f$-divergences (\cref{app:f-divs}). We focus on finding conditions under which \cref{eqn:elbo} is a well-defined objective. We discuss the case when parametric models are used to describe both the prior and the approximate posterior, and then move to the case when a Gaussian process is used as either the approximate posterior or the prior. 

\paragraph{Parametric distributions}

We call a distribution parametric if it is described by a probability distribution over a parameter space, which we assume is $\R^k$, as well as a mapping from parameters $\Theta$ to functions, i.e.~$\hat{f}(x)= h(x,\Theta)$ for all $x \in \mathcal{X}=\R^d$. Note that $\hat{f}$ is a random function (stochastic process) indexed by $\mathcal{X}$. Moreover, from its definition, we see that given $\Theta \sim P_\Theta$, upon  defining $g(\Theta)(x)=h(x,\Theta)$, we have $\hat{f} = g(\Theta) \sim g_* P_\Theta$. We call such a distribution parameterized by the pair $(P_\Theta, g)$. The function $g: \R^k \to \R^\mathcal{X}$ is the mapping from parameters to functions and is assumed to be measurable. From the data processing inequality (\cref{prop:dpi}), we make the following observation:
\begin{prop}\label{prop:klinequality}
Suppose that the approximate posterior is parameterized by $(Q_\theta, g)$ and that the prior is parameterized by $(P_\theta,g)$. Define $Q=g_*Q_\theta$ and $P=g_*P_\theta$ to be the approximate posterior predictive and the prior predictive respectively. Then,
\begin{equation}\label{eqn:dpi-fvi}
    \mathrm{D}_{KL}(Q, P) \leq \mathrm{D}_{KL}(Q_\theta, P_\theta).
\end{equation}
If $g$ is injective (each set of parameters leads to a unique predictive function) equality holds. 
\end{prop}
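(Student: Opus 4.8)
The plan is to get both parts directly from the data processing inequality (\cref{prop:dpi}): the stated inequality by transporting through $g$, and the equality case by transporting through a measurable left inverse of $g$.

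For \cref{eqn:dpi-fvi}, I would apply \cref{prop:dpi} to the measurable map $g\colon\R^k\to\R^\mathcal{X}$ with the roles $P\mapsto Q_\theta$ and $Q\mapsto P_\theta$. Since $Q=g_*Q_\theta$ and $P=g_*P_\theta$ by definition, this reads $\mathrm{D}_{KL}(Q,P)=\mathrm{D}_{KL}(g_*Q_\theta,g_*P_\theta)\le\mathrm{D}_{KL}(Q_\theta,P_\theta)$, which is exactly the claim; only measurability of $g$ is used.

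For the equality statement when $g$ is injective, I would establish the reverse inequality $\mathrm{D}_{KL}(Q_\theta,P_\theta)\le\mathrm{D}_{KL}(Q,P)$ by a second application of \cref{prop:dpi}, now to a measurable left inverse of $g$. Because $g$ is injective there is a map $r\colon\R^\mathcal{X}\to\R^k$ with $r\circ g=\mathrm{id}_{\R^k}$ (take $r$ to be the genuine inverse on $\mathrm{Im}(g)$ and, say, constant off it). Then $r_*Q=r_*g_*Q_\theta=(r\circ g)_*Q_\theta=Q_\theta$ and likewise $r_*P=P_\theta$, and since $Q$ and $P$ are both supported on $\mathrm{Im}(g)$ the value of $r$ off the image is irrelevant. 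Applying \cref{prop:dpi} to $r$ with $P\mapsto Q$, $Q\mapsto P$ gives $\mathrm{D}_{KL}(Q_\theta,P_\theta)=\mathrm{D}_{KL}(r_*Q,r_*P)\le\mathrm{D}_{KL}(Q,P)$, and combining with \cref{eqn:dpi-fvi} forces equality. Injectivity enters only here, to ensure no information is lost in passing from $\theta$ to $g(\theta)$.

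The delicate point — and the step I expect to be the main obstacle — is the measurability of this left inverse, equivalently that $g$ is bi-measurable ($\mathrm{Im}(g)$ a measurable subset of $\R^\mathcal{X}$ and $g^{-1}\colon\mathrm{Im}(g)\to\R^k$ measurable). For an injective measurable map between standard Borel spaces this is the classical Lusin--Souslin / Kuratowski theorem; however $\R^\mathcal{X}$ with the product $\sigma$-algebra is not standard Borel when $\mathcal{X}$ is uncountable, so I would invoke it under an explicit mild regularity hypothesis on the parameterization (for instance that $g$ factors through a Polish space, which covers the parametric families considered here). Granting bi-measurability, one can alternatively bypass the second use of \cref{prop:dpi} and compute directly: when $\mathrm{D}_{KL}(Q_\theta,P_\theta)<\infty$ one checks $Q\ll P$ with $\tfrac{dQ}{dP}=\bigl(\tfrac{dQ_\theta}{dP_\theta}\bigr)\circ g^{-1}$ on $\mathrm{Im}(g)$, so the two divergences agree after a change of variables; and if $Q_\theta\not\ll P_\theta$, a $P_\theta$-null, $Q_\theta$-non-null set has measurable image witnessing $Q\not\ll P$, so both sides are infinite.
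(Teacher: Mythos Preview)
Your approach is essentially the same as the paper's: the inequality is obtained by applying the data processing inequality to $g$, and equality in the injective case is obtained by taking a left inverse $g'$ of $g$ and applying the data processing inequality in the reverse direction to get $\mathrm{D}_{KL}(Q_\theta,P_\theta)\le \mathrm{D}_{KL}(Q,P)$. Your discussion of the measurability of the left inverse actually goes beyond the paper, which simply asserts the existence of such a $g'$ and applies \cref{prop:dpi} without commenting on measurability; so the ``delicate point'' you flag is one the paper itself leaves implicit.
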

\begin{remark}
\Cref{prop:klinequality} implies that in cases where we can perform variational inference in parameter space, variational inference in function-space is also well-defined. While the ELBO using KL-divergence in parameter-space depends on the choice of parameterization (unless $g$ is injective), the ELBO in function-space does not.
\end{remark}
\Citet{ma2019variational} observed the inequality in \cref{eqn:dpi-fvi} in the context of performing inference with stochastic processes; it is an immediate consequence of the data processing inequality. The equality can be derived by noting that if $g$ is injective, it has a left inverse and the data-processing inequality can be applied in the opposite direction. We note that it is possible that for specific $Q, P$ equality can hold even if $g$ is not injective: in other words, the converse is not generally true. 

When applied to variational inference in function-space, this inequality tells us that the KL divergence in function-space is no larger than the KL divergence in parameter space, implying that the evidence lower bound (ELBO) obtained in function-space must be closer to the log marginal likelihood than the ELBO in parameter space. 

\paragraph{Bayesian neural networks}
The above discussion applies immediately to variational inference with Bayesian neural networks (after establishing measurability of $g$), given the approximate posterior and the prior have the same architecture. This can be seen as motivation for using functional variational inference, particularly since BNNs are highly non-identifiable (e.g.~permuting neurons leads to identical predictions), and equality between parameter-space and function-space KLs does not generally hold in these models. A natural question is whether variational inference can be applied when the prior and candidate approximate posteriors have different architectures. We show in \cref{app:bnn-proof} that in the case of single hidden ReLU networks, if the prior and approximate posterior have different widths, and both $Q_\Theta$ and $P_\Theta$ have densities (with respect to Lebesgue measure), then the KL divergence between the approximate posterior and the prior is infinite. We conjecture that this result is true much more broadly for Bayesian neural networks of different architectures, excluding cases where architectures are trivially the same (e.g.~if a neuron is added but the outgoing weight is $0$ with probability $1$ so that the additional neuron is always pruned).

\begin{remark}
\Cref{prop:bnn-diff-widths} (\cref{app:bnn-proof}) shows that variational inference in function space is not always well-defined when both the prior and approximate posteriors are defined using neural networks, if the architectures are not the same.
\end{remark}

\paragraph{Parametric distributions and Gaussian processes}
A Gaussian process (GP) is a random function such that when the function is indexed at any finite collection of points, the distribution of the function values is multivariate Gaussian. We call a Gaussian process non-degenerate if there exist arbitrarily large collections of points where, when we evaluate the function at these points, the resulting multivariate Gaussian has a full-rank covariance matrix (equivalently has a density with respect to the appropriate Lebesgue measure).\footnote{To be non-degenerate, the GP needs infinite basis functions \citep[\S 4.3]{gpml}.}

Gaussian processes have been proposed for use in functional inference schemes both as priors \citep{sun2018functional} and as approximate posteriors \citep{ma2019variational}. However, under quite general conditions, we show that the KL divergence between Gaussian processes and parametric models is not a useful objective.
\begin{restatable}[]{prop}{parGPs}\label{prop:param-gp-kl}
Let $(Q_\theta,g)$ parameterize the approximate posterior of a parametric model and let $P$ be a non-degenerate Gaussian process. Assume that $g(\cdot)(x)$ is locally Lipschitz for all $x$. Let $Q=g_*Q_\theta$. Then, $\KL(Q,P) = \infty$ and $\KL(P,Q) = \infty$.
\end{restatable}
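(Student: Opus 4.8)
The plan is to reduce the infinite-dimensional claim to a finite-dimensional one via the data processing inequality (\cref{prop:dpi}), and then exploit a dimension mismatch. First, using non-degeneracy of $P$, fix a finite index set $X \subset \mathcal{X}$ with $k < |X| =: m < \infty$ such that the marginal $P_X$ is a Gaussian on $\R^m$ with full-rank covariance; since a principal submatrix of a positive definite matrix is positive definite, such an $X$ exists. Then $P_X$ has a strictly positive Lebesgue density, hence is mutually absolutely continuous with Lebesgue measure $\lambda_m$ on $\R^m$. On the other side, $Q_X$ is the pushforward of $Q_\theta$ under $F \colon \R^k \to \R^m$, $F(\theta) := (g(\theta)(x))_{x \in X}$, which is locally Lipschitz because each coordinate $g(\cdot)(x)$ is.

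The crux is to show $A := F(\R^k)$ satisfies $\lambda_m(A) = 0$. Since $F$ is locally Lipschitz and $\R^k$ is second countable (Lindel\"of), we may cover $\R^k$ by countably many balls $B_i$ on each of which $F$ is $L_i$-Lipschitz. Fixing one such ball $B$ of radius $\rho$ with $F$ being $L$-Lipschitz on it, $B$ is covered by $O(N^k)$ sets of diameter $O(\rho/N)$, so $F(B)$ is covered by $O(N^k)$ sets of diameter $O(L\rho/N)$, whence $\lambda_m(F(B)) = O(N^k \cdot (L\rho/N)^m) = O(N^{k-m}) \to 0$ as $N \to \infty$; hence $\lambda_m(F(B)) = 0$, and $\lambda_m(A) = 0$ by countable subadditivity. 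Here $A$ is Borel --- a countable union of compacta, as $F$ is continuous --- so the statements below are unambiguous.

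Given this, $Q_X(A) = 1$ while $P_X(A) = 0$, so $Q_X \not\ll P_X$ and $\KL(Q_X, P_X) = \infty$; likewise $Q_X(\R^m \setminus A) = 0$ while $P_X(\R^m \setminus A) = 1$, so $P_X \not\ll Q_X$ and $\KL(P_X, Q_X) = \infty$. Applying \cref{prop:dpi} with $g$ the canonical projection $\pi_X$ then yields $\KL(Q, P) \geq \KL(Q_X, P_X) = \infty$ and $\KL(P, Q) \geq \KL(P_X, Q_X) = \infty$, which is the claim.

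I expect the main obstacle to be the measure-zero image step: local Lipschitzness only supplies Lipschitz constants on small neighbourhoods, so the reduction to countably many Lipschitz pieces and the subdivision estimate need to be set up carefully, and one must verify measurability of $A$ so that $\lambda_m(A) = 0$ and $P_X(A) = 0$ are meaningful. The remaining ingredients --- positive definiteness of principal submatrices, absolute continuity of non-degenerate Gaussians with respect to Lebesgue measure, and the two failures of absolute continuity --- are immediate.
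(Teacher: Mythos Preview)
Your proof is correct and follows the same overall strategy as the paper: project to a finite index set of size $>k$ where $P$ is non-degenerate, show the image of the parametric map is Lebesgue-null there, deduce that the marginals are mutually singular, and lift back to function space. The difference lies in the null-image step. The paper takes $|A|=k+1$, pads $\pi_A\circ g$ to a map $\tilde g_A:\R^{k+1}\to\R^{k+1}$ by ignoring an extra input coordinate, and then invokes the textbook fact (Rudin, \emph{Real and Complex Analysis}, Lemma~7.25) that locally Lipschitz self-maps of $\R^{k+1}$ send Lebesgue null sets to null sets, applied to the null set $\R^k\times\{0\}$. Your argument instead gives a direct, self-contained covering estimate (essentially a Hausdorff-dimension bound), which is more elementary and works for any $m>k$ without the dimension-matching trick. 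The paper also routes the conclusion through mutual singularity via its \cref{lem:singular-equiv,lem:proj-singular} rather than appealing directly to the data processing inequality, but this is a cosmetic repackaging of the same implication.
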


The assumption that $g$ is locally Lipschitz in each output is very weak; it holds for most commonly used Bayesian machine learning models, including (deep) BNNs with ReLU, tanh or sigmoid non-linearities. 
\begin{remark}
\Cref{prop:param-gp-kl} tells us that using KL divergences as an objective function to approximate Gaussian processes with parametric models does not lead to a useful objective. However, this does not mean that parametric models cannot approximate Gaussian processes well. This is evidently false from the success of methods such as Random Fourier features \citep{rahimi2007random} and Subset of Regressors \citep[\S 7]{wahba1990spline}.
\end{remark}

While \cref{prop:bnn-diff-widths,prop:param-gp-kl} highlight limitations of variational objective defined in function space, we believe that the overall idea of approximating the predictive posterior as opposed to the parameter-space posterior is well-motivated. \Cref{prop:bnn-diff-widths,prop:param-gp-kl} suggest the need for other objective functions for this task, as well as carefully assessing whether the predictive posterior obtained by variational inference in function space resembles the exact predictive posterior.

\section{Benchmarking Functional Approaches to Variational Inference}\label{sec:benchmarking}

Bayesian linear regression with Gaussian priors can be used as a tool for assessing the quality of variational inference in function-space. We consider the model,
\begin{equation}\label{eqn:blr}
    \hat{y}(x_i) = \Theta\transpose \phi(x_i) + \epsilon_i, \quad \epsilon_i \sim \mathcal{N}(0, \sigma^2), \quad \Theta \sim \mathcal{N}(0, I),
\end{equation}
where $\phi: \mathcal{X} \to \R^k$ is a feature mapping, and $\Theta$ is the (random) vector of weights. Recall the notation introduced in Section~\ref{sec:theorems}, $g(\Theta)(x) = \Theta^T \phi(x)$. In this case, we can verify that $g$ is injective for a given set of features by finding a set of $k$ inputs $A=\{a_1,\dotsc, a_k\}$ such that the vectors $\{\phi(a_1),\dotsc, \phi(a_k)\}$ are linearly independent. Therefore, \cref{prop:klinequality} implies that the KL divergence in parameter and function-space are \emph{exactly the same}. We can therefore expect that successful inference methods with identical variational families should obtain the same approximate posteriors, regardless of whether they are represented in the parameter space or function-space.

Since the exact solution for BLR is Gaussian and can be computed analytically, we can compare different function-space inference methods  by seeing which method finds the solution with the smallest KL divergence to the exact posterior. This allows us to assess the quality of \emph{inference} while avoiding potential issues of model mis-specification, whereby it is possible to achieve good test performance with a poor model by using poor inference.

\begin{figure}[t]
\floatconts
  {fig:toy}
  {\caption{Predictive posteriors for each method using all (full-covariance) Gaussian distributions as $\variationalfamily$, in a toy 1d regression.}}
  {%
    \subfigure[\algname{Exact}]{
      \includegraphics[width=0.23\linewidth]{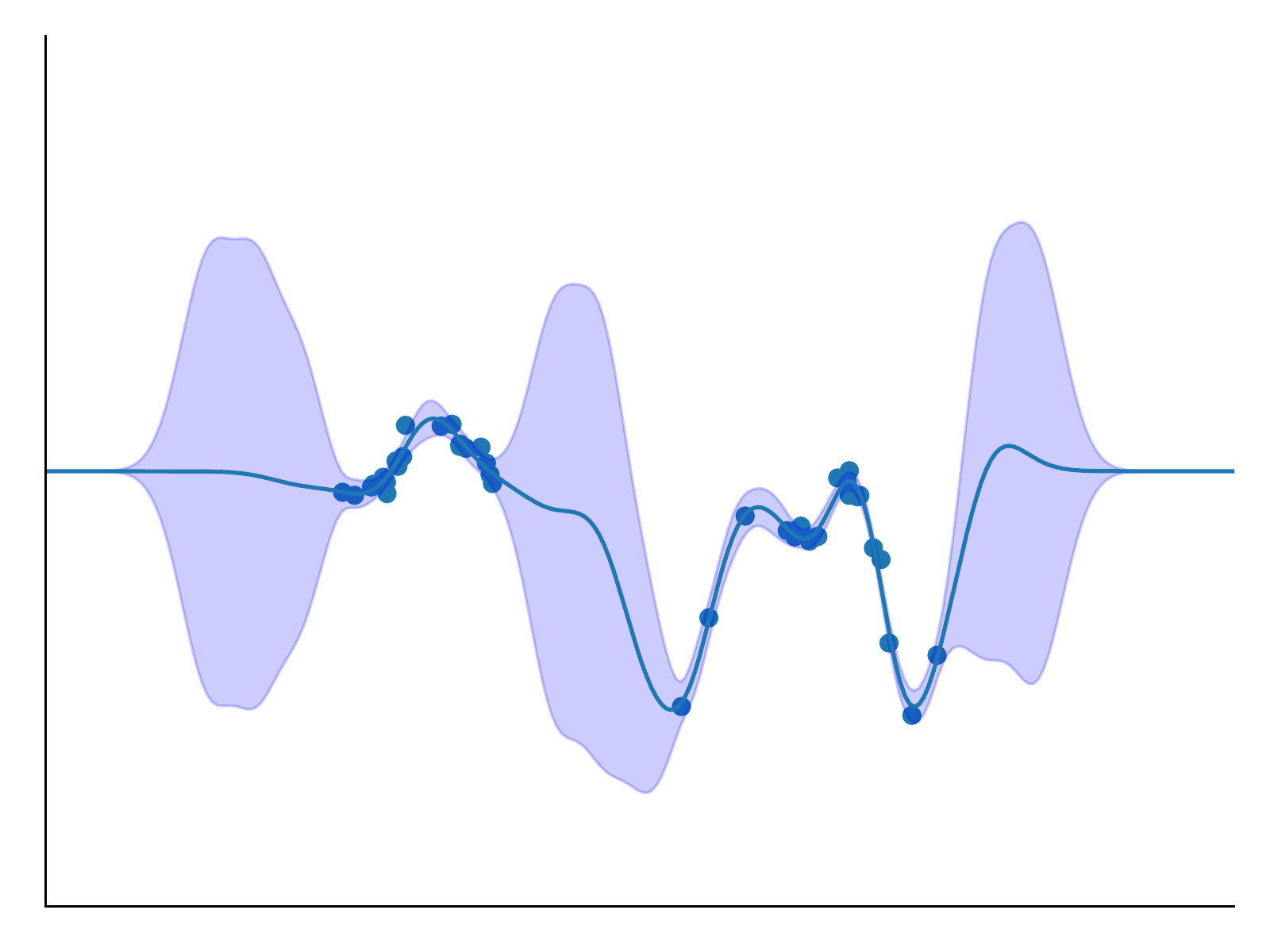}}%
    \subfigure[\algname{FixedA}]{
      \includegraphics[width=0.23\linewidth]{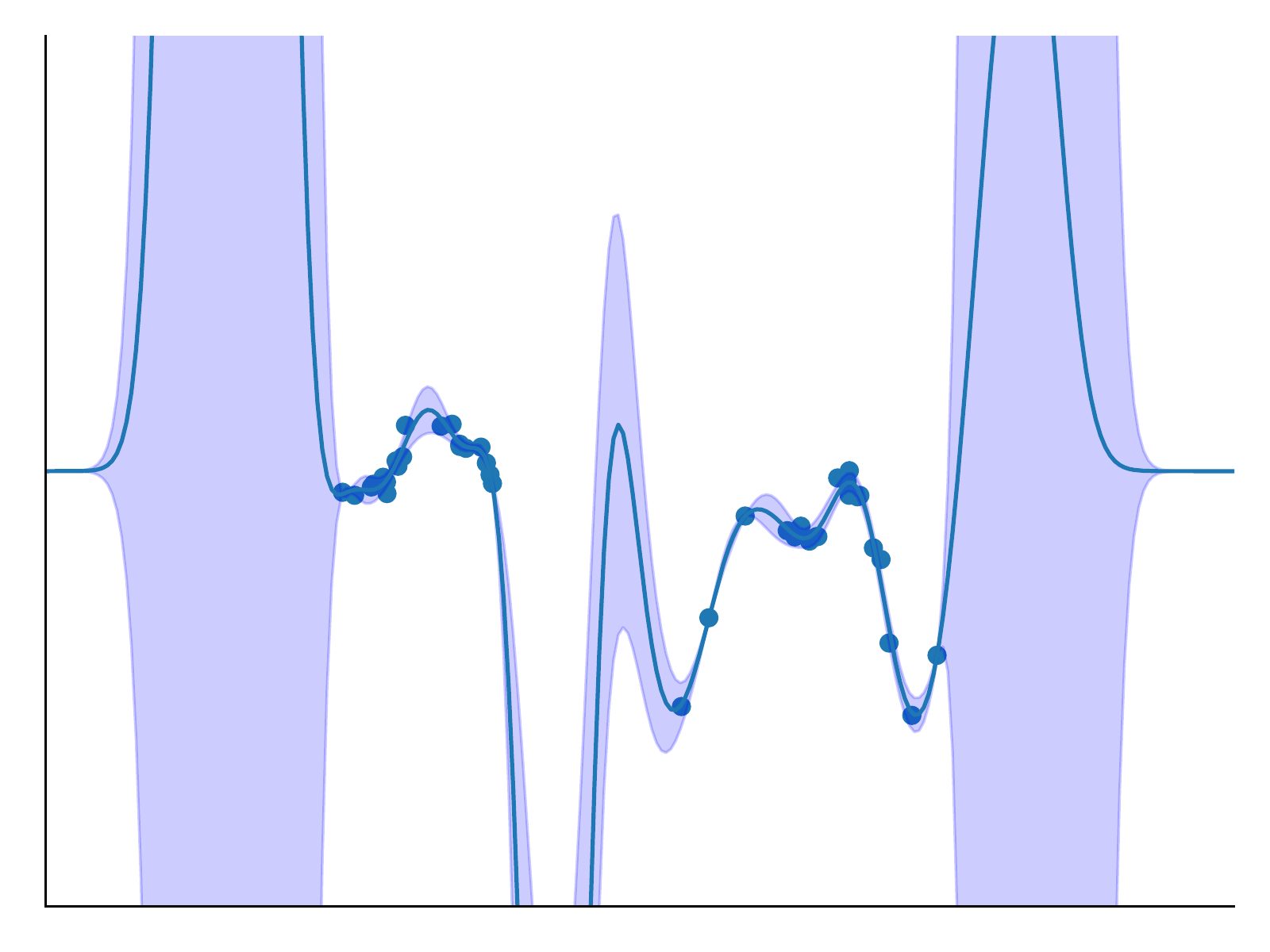}}
    \subfigure[\algname{RandA}]{
      \includegraphics[width=0.23\linewidth]{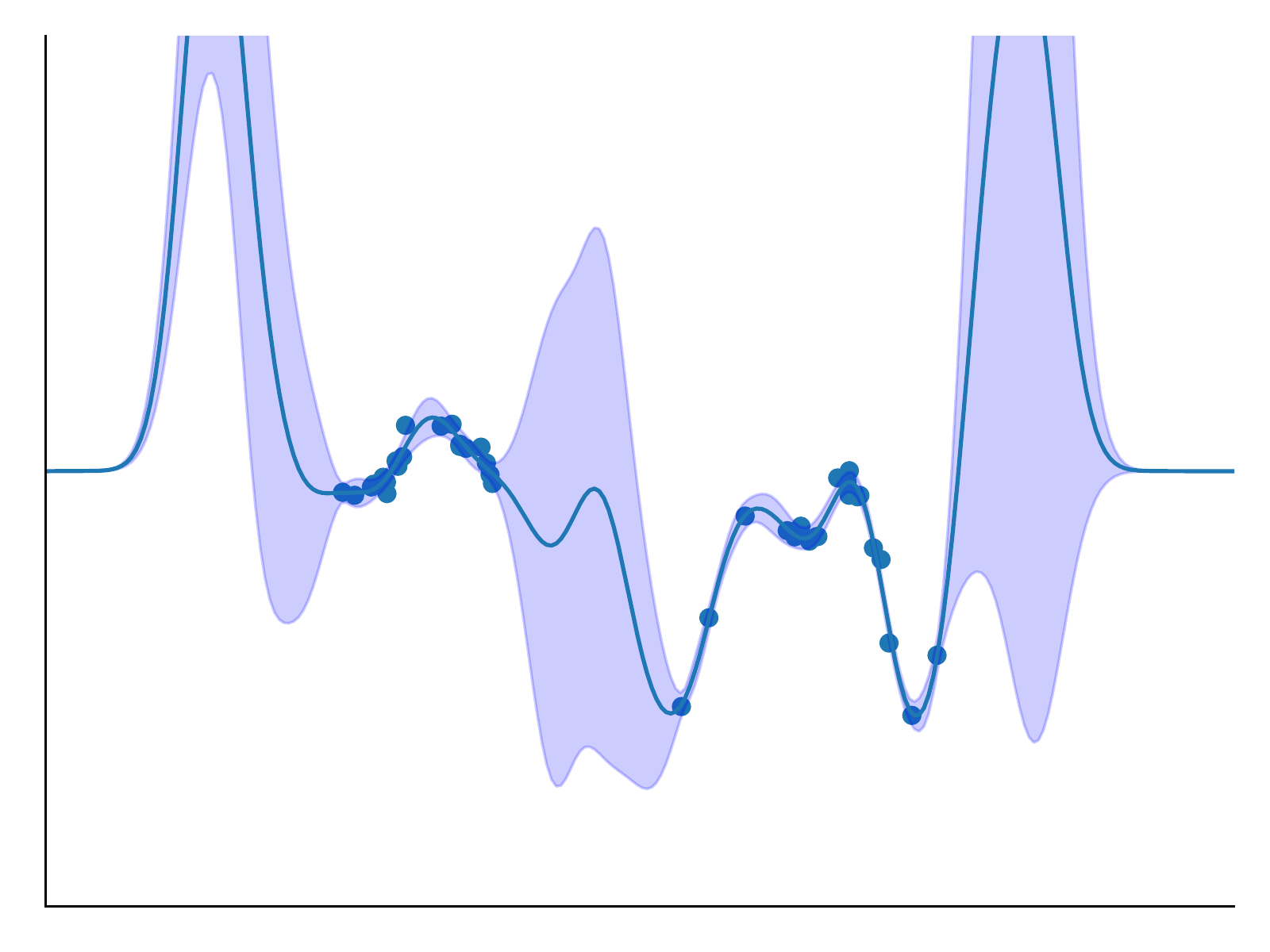}}
    \subfigure[\algname{SSGE}]{
      \includegraphics[width=0.23\linewidth]{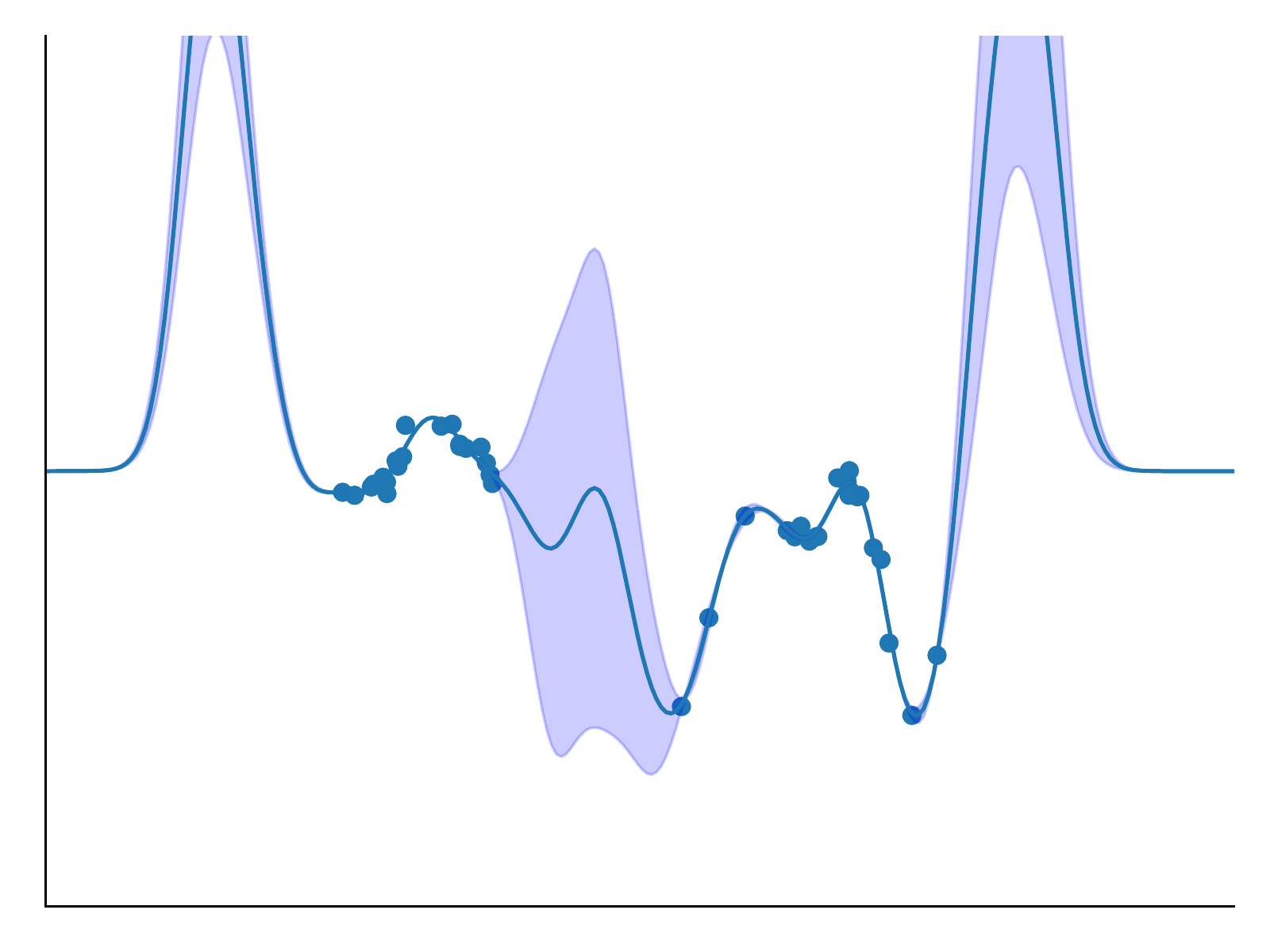}}
  }
\end{figure}

Following \citet{sun2018functional}, we consider the modified variational objective,
\begin{equation}\label{eqn:modified-elbo}
    \max_{Q \in \variationalfamily}\left( \mathbb{E}_{\Theta \sim Q}[\log p(y|x,\Theta)] - \mathbb{E}_{A \sim\mu}[\mathrm{D}_{KL}(Q_A, P_A)]\right)
\end{equation}
where $P_{A}, Q_A$ denote marginals of $P,Q$ at points indexed by $A$ and $\mu$ is a measure on subsets of $\mathcal{X}$. We note that \cref{eqn:modified-elbo} may be finite, even in cases where \cref{eqn:elbo} is not. We compare four algorithms using both full-covariance Gaussians (\algname{Full}) and fully-factorised Gaussians (\algname{FFG}) as the approximating families (noting that \algname{Full} will contain the true posterior). As a baseline, we consider using the \algname{Exact} KL divergence, i.e. $\mathrm{D}_{KL}(Q, P)$, which we can obtain since the KLs are the same in weight and function-space. In \algname{FixedA}, we randomly select a set of input points $A$ and keep it fixed throughout training in \cref{eqn:modified-elbo}. For \algname{RandA}, we use the approach proposed in \citet{sun2018functional} and sample a different set $A$ at each iteration, so that we Monte Carlo evaluate $\mathbb{E}_{A\sim\mu}[\mathrm{D}_{KL}(Q_A, P_A)]$ in \cref{eqn:modified-elbo}. However, as we can evaluate the KL exactly in this case, we do not use SSGE, instead leaving it for the final algorithm, \algname{SSGE}, which uses the random sampling scheme as well. Therefore, \algname{SSGE} is similar to the implementation in \citet{sun2018functional}, although we do not use their heuristic for re-scaling the KL term to reduce over-fitting (as they note that the modified objective will underestimate the KL term since it cannot achieve the supremum over all finite inputs). We consider two experiments, for which we provide additional experimental details in \cref{app:experimental-details}.

\paragraph{Toy experiment} We generate a synthetic 1D dataset by sampling a 20-dimensional weight vector from the prior and applying it to 20 radial basis function features. We then use this model to sample 40 noisy $(x, y)$ pairs with a noise standard deviation of $0.1$. We perform full-covariance inference with each method and plot the predictive posteriors in \Cref{fig:toy}. The posteriors found using the approximate KL divergence are prone to over-fitting; in the case of \algname{FixedA} it can be shown the optimal mean behaves like a combination of maximum likelihood estimation (\algname{MLE}) and maximum a posteriori (\algname{MAP}) inference (\cref{app:blr-elbo}).

\paragraph{UCI regression task} We fit a sparse variational GP \citep{titsias2009variational} with a squared-exponential kernel, with a separate length-scale for each input dimension. The learned kernel parameters and inducing points form a good representation of the training data \citep[\S 7]{wahba1990spline}. We use these features for linear regression. We show the negative log predictive densities (NLPDs) and KLs to the true posterior for the \dsetname{boston}, \dsetname{concrete}, and \dsetname{energy} datasets in \Cref{fig:uci-small}, providing additional results in \cref{app:experimental-details}.

\begin{figure}
    \centering
    \includegraphics[width=\linewidth]{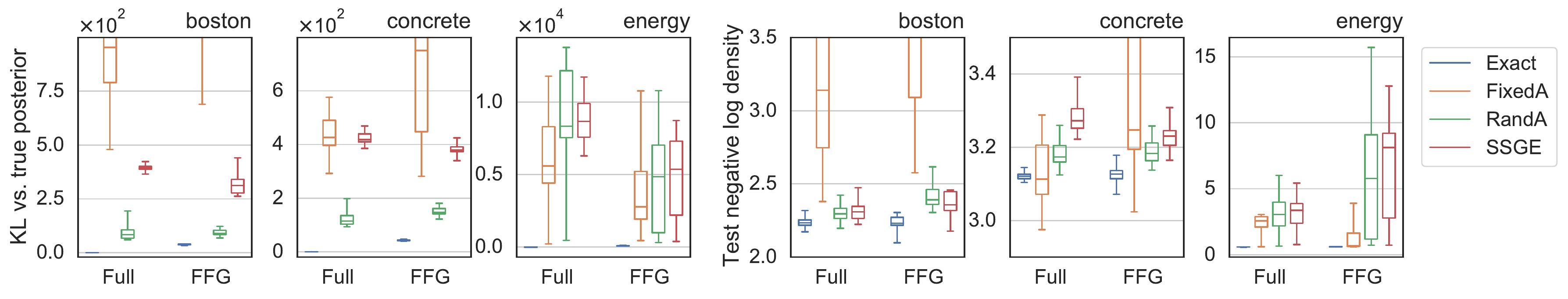}
    \caption{NLPD and KL for the first UCI data sets (alphabetic order). Lower is better. In \dsetname{boston} and
     \dsetname{concrete}, the KL divergence to the posterior of \algname{SSGE} is much larger than the one from \algname{RandA}.}
    \label{fig:uci-small}
\end{figure}
\paragraph{Conclusions}
In general, we observe from both experiments that, of the approximate methods \algname{RandA}, performs the best in terms of matching the true posterior. However, we note that all the approximate methods exhibit some amount of overfitting, which we would expect since the true functional KL divergence is obtained by taking the supremum over \emph{all} finite marginals. Finally, we note that \algname{SSGE} tends to perform worse than \algname{RandA}, which we would expect as it introduces an additional approximation. It is our hope that using this benchmark will help researchers find ways of improving these methods; for example, using other methods of implicit inference to reduce the discrepancy between \algname{RandA} and a version of inference that does not rely on Gaussianity.

\bibliography{main}

%%%%%%%%%%%%%%%%%%%%%%%%%%%%%%%%%%%%%%%END OF MAIN TEXT%%%%%%%%%%%%%%%%%%%%%%%%%%%%%%%%%%%%%%%%%%%%%%%%%

\appendix

\section{Measure-theoretic Definitions and Lemmas}
In this section, we recall several definitions and lemmas that will be useful in formalizing the results in the main text.  

We say two measures $P, Q$ on a common measurable space are \emph{equivalent} and write $P \sim Q$ if $P \ll Q$ and $Q \ll P$. We say $P$ and $Q$ are \emph{mutually singular} and write $P \perp Q$ if there exists a (measurable) event $E$ such that $P(E)=0$ and $Q(E^c)=0$ (where $E^c$ denotes the complement of $E$). In the case of probability measures, this is the same as $P(E)=0$ and $Q(E)=1$.

\begin{lem}\label{lem:singular-equiv}
Let $Q,\,Q'$, and $P$ be three measures on the measurable space $(A,\, \Sigma_A)$. Then $Q \sim Q'$ and $Q' \perp P$ implies that $Q \perp P$.
\end{lem}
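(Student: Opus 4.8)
The plan is to unwind the definitions of equivalence and mutual singularity and chase through the witnessing sets. Suppose $Q \sim Q'$ and $Q' \perp P$. By definition of mutual singularity there is a measurable set $E \in \Sigma_A$ with $Q'(E) = 0$ and $P(E^c) = 0$. I want to produce a (possibly different) measurable set witnessing $Q \perp P$; the natural candidate is the \emph{same} set $E$.

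The key step is to show $Q(E) = 0$. Since $Q \sim Q'$ we have in particular $Q \ll Q'$, so every $Q'$-null set is $Q$-null. Applying this to $E$ (which is $Q'$-null) gives $Q(E) = 0$. We already have $P(E^c) = 0$ from the hypothesis $Q' \perp P$, and this does not involve $Q$ or $Q'$ at all, so it carries over verbatim. Hence $E$ is a measurable set with $Q(E) = 0$ and $P(E^c) = 0$, which is exactly the definition of $Q \perp P$. That completes the argument.

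There is essentially no obstacle here: the only direction of the equivalence $Q \sim Q'$ that is used is $Q \ll Q'$, and the role of absolute continuity is precisely to transport null sets in the direction we need. One minor point worth stating explicitly in the write-up is that mutual singularity is symmetric (so it does not matter whether we phrase $Q' \perp P$ as ``$\exists E: Q'(E)=0, P(E^c)=0$'' or with the roles of $E$ and $E^c$ swapped), but since we are free to choose the witnessing set, no case analysis is needed. I would present the proof in three short sentences mirroring the three facts above: extract $E$, deduce $Q(E)=0$ via $Q \ll Q'$, and conclude.
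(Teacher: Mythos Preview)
Your argument is correct and essentially identical to the paper's: both extract a witness $E$ with $Q'(E)=0$ and $P(E^c)=0$, then use $Q \ll Q'$ to conclude $Q(E)=0$. Your additional remarks about only needing one direction of the equivalence and the symmetry of $\perp$ are accurate but not required for the proof.
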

\begin{proof}
By the assumption $Q' \perp P$, there exists an event $E \in \Sigma_A$ such that $Q'(E) = 0$ and $P(E^c) = 0$. Since $Q \sim Q'$, $Q'(E)=0$ implies $Q(E)=0$.
\end{proof}

\begin{defi}[Canonical projection]\label{def:canonical-pi}
For any $A\subset \mathcal{X}$, let $\pi_A: \R^\mathcal{X}\to \R^A$ denote the canonical projection onto $A$, i.e. $\pi_A(f) = (f(a))_{a\in A}$.
\end{defi}

\subsection{Product $\sigma$-algebra}
For any finite $A \subset \mathcal{X}$, we let $\lambda_A$ denote Lebesgue measure on $\R^A$, restricted to the Borel $\sigma$-algebra. As in \citet{sun2018functional}, we consider the product $\sigma$-algebra on $\R^{\mathcal{X}}$ i.e.~the coarsest $\sigma$-algebra on $\R^\mathcal{X}$ such that $\pi_{\{x\}}:\R^\mathcal{X} \to \R$ is measurable as a map to $\R$ equipped with the Borel $\sigma$-algebra for all $x \in \mathcal{X}$. For arbitrary $A \subset \mathcal{X}$ the map $\pi_A$ is measurable when $\R^A$ and $\R^\mathcal{X}$ are both equipped with their respective product $\sigma$-algebras (see \citet[Exercise 2.4.1.2]{tao2011introduction}).

\begin{lem}\label{lem:proj-singular}
Let $Q,P$ denote measures on $\Sigma$. Suppose there exists an $A \subset \mathcal{X}$ such that $\pi_{A*} Q \perp \pi_{A*} P$. Then $Q \perp P$. 
\end{lem}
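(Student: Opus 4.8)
The plan is to transport the witnessing event for mutual singularity from the finite-dimensional space $\R^A$ back up to $\R^\mathcal{X}$ using the preimage under the canonical projection $\pi_A$, and then use that $\pi_A$ is measurable together with the definition of pushforward. Concretely, since $\pi_{A*}Q \perp \pi_{A*}P$ there is a measurable set $F \subset \R^A$ (in the product $\sigma$-algebra on $\R^A$) with $(\pi_{A*}Q)(F) = 0$ and $(\pi_{A*}P)(F^c) = 0$. Set $E = \pi_A^{-1}(F) \subset \R^\mathcal{X}$. First I would note that $E$ lies in $\Sigma$ (the product $\sigma$-algebra on $\R^\mathcal{X}$) because $\pi_A$ is measurable when both spaces carry their product $\sigma$-algebras, as recalled just above the lemma statement (citing \citet[Exercise 2.4.1.2]{tao2011introduction}).

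Next I would unwind the definition of the pushforward: for any measurable $g$ and measure $M$, $(g_*M)(\cdot) = M(g^{-1}(\cdot))$. Applying this with $g = \pi_A$ gives $Q(E) = Q(\pi_A^{-1}(F)) = (\pi_{A*}Q)(F) = 0$. Similarly, since $\pi_A^{-1}(F^c) = (\pi_A^{-1}(F))^c = E^c$, we get $P(E^c) = P(\pi_A^{-1}(F^c)) = (\pi_{A*}P)(F^c) = 0$. Thus $E$ is a measurable event in $\R^\mathcal{X}$ witnessing $Q(E) = 0$ and $P(E^c) = 0$, which is exactly the definition of $Q \perp P$.

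There is essentially no hard part here — the lemma is a routine bookkeeping argument. The only point requiring a moment's care is the measurability of $E = \pi_A^{-1}(F)$: one must make sure the $\sigma$-algebra on $\R^A$ with respect to which $F$ is measurable is the same one (the product $\sigma$-algebra) for which $\pi_A : \R^\mathcal{X} \to \R^A$ has been shown to be measurable. Since the lemma's hypothesis $\pi_{A*}Q \perp \pi_{A*}P$ is a statement about measures on $\R^A$ equipped with that product $\sigma$-algebra, the witnessing set $F$ is automatically measurable in the right sense, and the preimage fact $\pi_A^{-1}(F^c) = (\pi_A^{-1}(F))^c$ is a general set-theoretic identity, so the argument closes without further subtlety.
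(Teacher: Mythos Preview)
Your proposal is correct and follows essentially the same argument as the paper: take a witnessing set for the singularity of the pushforwards, pull it back along $\pi_A$, and use measurability of $\pi_A$ together with the definition of pushforward to conclude. The paper's version is just a terser rendition of exactly what you wrote.
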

\begin{proof}
Let $E \in \Sigma_A$ denote a witness to the orthogonality of $\pi_{A*} P \perp \pi_{A*} Q$. Then $\pi_{A}^{-1}(E) \in \Sigma$ by the measurability of projections. It follows from the definition of a pushforward measure that $\pi_{A}^{-1}(E)$ is a witness to the orthogonality of $P$ and $Q$.
\end{proof}

\subsection{Topological Lemmas}
As we work in the product $\sigma$-algebra generated by the Borel $\sigma$-algebra, we use several lemmas from topology in order to prove sets are measurable. We note that in general the product $\sigma$-algebra on $\R^{\mathcal{X}}$ generated by the Borel $\sigma$-algebra on $\R$ is not the same as the Borel $\sigma$-algebra generated by the product topology on $\R^\mathcal{X}$ when $\mathcal{X}$ is uncountable \citep[Exercise 2.4.1.5-6]{tao2011introduction}. 

\begin{lem}[Closed Mapping Lemma {\citep[Theorem 2.6]{conradfundamental}}]\label{lem:closed-map}
Suppose $X$ is compact and $Y$ is Hausdorff. Let $\phi: X \to Y$ continuous, then $\phi$ is a closed map.
\end{lem}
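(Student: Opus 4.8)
The plan is to obtain the lemma by chaining three classical facts from point-set topology, applied to an arbitrary closed set $C \subseteq X$. First I would observe that $C$, being a closed subset of the compact space $X$, is itself compact: any open cover of $C$ becomes an open cover of $X$ after adjoining the open set $X \setminus C$, a finite subcover of which restricts to a finite subcover of $C$.

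Next I would use that the continuous image of a compact set is compact, so that $\phi(C)$ is a compact subset of $Y$: given an open cover of $\phi(C)$, its preimages under $\phi$ form an open cover of $C$ by continuity, a finite subcover of which maps forward to a finite subcover of $\phi(C)$. At this point it remains only to upgrade ``compact'' to ``closed'' inside $Y$.

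The final step is where the Hausdorff hypothesis is actually used: a compact subset $K$ of a Hausdorff space is closed. I would prove $Y \setminus K$ is open by fixing $y \notin K$ and, for each $k \in K$, invoking Hausdorffness to get disjoint open sets $U_k \ni y$ and $V_k \ni k$; the $V_k$ cover $K$, finitely many $V_{k_1},\dots,V_{k_n}$ suffice, and then $\bigcap_{i=1}^n U_{k_i}$ is an open neighborhood of $y$ disjoint from $K$. Taking $K = \phi(C)$ gives that $\phi(C)$ is closed, which is the claim.

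I do not expect a genuine obstacle here, as the statement is standard; the only point worth flagging is that the Hausdorff assumption on $Y$ is essential and enters solely in the third step. Without it compact sets need not be closed and the conclusion fails, so the proof plan deliberately isolates that hypothesis to a single, clearly delineated step.
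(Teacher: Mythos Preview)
Your proposal is correct and is the standard textbook argument: closed in compact is compact, continuous images preserve compactness, and compact in Hausdorff is closed. The paper does not actually supply a proof of this lemma; it merely states it with a citation to an external source, so there is no in-paper proof to compare against, but your chain of three classical facts is exactly the argument one finds in the cited reference.
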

\begin{lem}\label{lem:countable-closed-map}
Suppose $X$ can be written as a countable union of compact subspaces and $Y$ is Hausdorff. Let $\phi: X \to Y$ continuous. Let $A \subset X$ be a closed set, the $\phi(A)$ is a countable union of closed sets.
\end{lem}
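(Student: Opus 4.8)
The plan is to reduce the statement to the Closed Mapping Lemma (\cref{lem:closed-map}) applied piece by piece along the given countable cover. Write $X = \bigcup_{n=1}^\infty K_n$ where each $K_n$ is compact (as a subspace of $X$). Since $A$ is closed in $X$, the intersection $A \cap K_n$ is closed in the subspace $K_n$, and a closed subset of a compact space is compact; hence $A \cap K_n$ is compact.

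Next, consider the restriction $\phi|_{K_n} : K_n \to Y$, which is continuous since $\phi$ is. By \cref{lem:closed-map} (with $K_n$ compact and $Y$ Hausdorff), $\phi|_{K_n}$ is a closed map, so $\phi(A \cap K_n) = \phi|_{K_n}(A \cap K_n)$ is closed in $Y$. (Equivalently, one can skip the Closed Mapping Lemma and note directly that $\phi(A \cap K_n)$ is a continuous image of a compact set, hence compact, hence closed because $Y$ is Hausdorff.)

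Finally, assemble the pieces: since $A = \bigcup_{n=1}^\infty (A \cap K_n)$ and images commute with unions, $\phi(A) = \bigcup_{n=1}^\infty \phi(A \cap K_n)$, which exhibits $\phi(A)$ as a countable union of closed sets, as claimed.

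There is no real obstacle here; the only points that need a word of care are (i) that "closed" and "compact" for $A \cap K_n$ are taken in the subspace topology of $K_n$ rather than of $X$ — this is fine because closedness in a subspace is inherited from closedness in the ambient space, and compactness is intrinsic — and (ii) that $Y$ being Hausdorff is exactly what upgrades "compact image" to "closed image." Everything else is bookkeeping.
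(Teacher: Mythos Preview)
Your proof is correct and follows essentially the same approach as the paper: decompose $A$ along the countable compact cover, apply the Closed Mapping Lemma (\cref{lem:closed-map}) to each restriction $\phi|_{K_n}$, and reassemble via $\phi(A)=\bigcup_n \phi(A\cap K_n)$. The paper's version omits your parenthetical alternative (compact image in Hausdorff is closed) and your care-points, but the argument is otherwise identical.
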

\begin{proof}
As $X$ is countable union of compact spaces, we can write $X = \bigcup_{i\in \mathbb{N}} X_i$, with $X_i$ compact. Define $A_i = A \cap X_i$, and note that $A_i$ is closed in the subspace topology of $X_i$. Define $\phi_i:X_i \to Y$ to be the restriction of $\phi$ to $X_i$; it follows from the definition of the subspace topology $\phi_i$ is continuous. Then,
\[
\phi(A) = \bigcup_{i \in \mathbb{N}} \phi_i(A_i).
\]
By the closed mapping lemma $\phi_i(A_i)$ is closed for all $i$.
\end{proof}

\section{$f$-divergences}\label{app:f-divs}

In this appendix, we briefly recall the definition of an $f$-divergence, as well as the necessary results to generalize our claims from the Kullback-Leibler divergence to other $f$-divergences. We use the definition from \citet{polyanskiy2014lecture},
\begin{defi}
Given a measurable space $(\Omega, \Sigma)$ and a convex function $f: [0,\infty) \to \R$ satisfying $f(1)=0$ which is strictly convex at $1$. For any two probability measures $P,Q$ on $\Sigma$, 
\begin{equation*}
    \mathrm{D}_f(Q,P) \coloneqq \int_{\{z:p(z)>0\}} f\left(\frac{q(z)}{p(z)}\right)p(z) d\mu(z) + f'(\infty)Q(\{z:p(z)=0\})
\end{equation*}
with $p(z)=\frac{dP}{d\mu}(z)$ and $q(z)=\frac{dQ}{d\mu}(z)$, $\mu$ is an arbitrary dominating measure (e.g.~$(P+Q)/2$), $f'(\infty) = \lim_{z\to 0^+} z f(1/z)$ and the understanding that if $Q(\{z:p(z)=0\})$ the second term is $0$ (even if $f'(\infty)=\infty$).
\end{defi}

Examples of $f$-divergences include KL divergence, total variation distance, squared Hellinger distance and $\alpha$-divergence. The data processing inequality holds for general $f$-divergences:

\begin{prop}[Data processing inequality~{\citet[Thm 6.2]{polyanskiy2014lecture}}]\label{prop:dpi-f}
Let $(A,\Sigma_A)$ and $(B, \Sigma_B)$ measurable spaces and $g: A \to B$ a $(\Sigma_A,\Sigma_B)$-measurable function, then for any $f$-divergence $D_f$, 
\begin{equation*}
\mathrm{D}_{f}\left(g_*P, g_*Q\right) \leq \mathrm{D}_{f}\left(P, Q\right),
\end{equation*}
where $g_*P$ indicates the pushforward measure of $P$ by $g$.
\end{prop}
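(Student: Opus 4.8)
\section*{Proof proposal for \cref{prop:dpi-f}}

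The statement is the classical data processing inequality of \citet{polyanskiy2014lecture}, so the plan is to reconstruct its proof by reducing the inequality to a conditional Jensen inequality for the \emph{perspective} of $f$. Throughout I write the claim as $\mathrm{D}_f(g_*P, g_*Q) \le \mathrm{D}_f(P,Q)$ and note that if the right-hand side is $+\infty$ there is nothing to prove, so we may assume it is finite.

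First I would fix a common dominating measure. Take $\mu = (P+Q)/2$ on $(A,\Sigma_A)$ with densities $p = \frac{dP}{d\mu}$ and $q = \frac{dQ}{d\mu}$; its pushforward $g_*\mu = (g_*P + g_*Q)/2$ dominates both $g_*P$ and $g_*Q$ on $(B,\Sigma_B)$, and I write $\tilde p = \frac{d(g_*P)}{d(g_*\mu)}$, $\tilde q = \frac{d(g_*Q)}{d(g_*\mu)}$. The key identity is that, with $\sigma(g) = g^{-1}(\Sigma_B) \subseteq \Sigma_A$ the sub-$\sigma$-algebra generated by $g$, one has $\tilde p \circ g = \mathbb{E}_\mu[p \mid \sigma(g)]$ and $\tilde q \circ g = \mathbb{E}_\mu[q \mid \sigma(g)]$ $\mu$-almost everywhere. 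This follows because $\tilde p \circ g$ is $\sigma(g)$-measurable and, for every $E \in \Sigma_B$, $\int_{g^{-1}(E)} (\tilde p \circ g)\, d\mu = \int_E \tilde p \, d(g_*\mu) = (g_*P)(E) = P(g^{-1}(E)) = \int_{g^{-1}(E)} p\, d\mu$; since the sets $g^{-1}(E)$ generate $\sigma(g)$, this is exactly the defining property of the conditional expectation.

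Next I would introduce the perspective function $\tilde f(x,y) = y\, f(x/y)$ for $y>0$, extended by lower semicontinuity to the boundary $y = 0$ via $\tilde f(x,0) = x\, f'(\infty)$ for $x>0$ and $\tilde f(0,0) = 0$. This function is jointly convex on $[0,\infty)^2$, and with it the definition of the divergence reads $\mathrm{D}_f(P,Q) = \int_A \tilde f(p,q)\, d\mu$, the boundary value $\tilde f(p,0) = f'(\infty)\, p$ recovering precisely the singular term $f'(\infty) P(\{q=0\})$. Likewise $\mathrm{D}_f(g_*P, g_*Q) = \int_A \tilde f(\tilde p \circ g,\, \tilde q \circ g)\, d\mu$. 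Substituting the conditional-expectation identity and applying the conditional Jensen inequality for the jointly convex $\tilde f$ gives, $\mu$-a.e., $\tilde f(\tilde p \circ g, \tilde q \circ g) = \tilde f(\mathbb{E}_\mu[p \mid \sigma(g)], \mathbb{E}_\mu[q \mid \sigma(g)]) \le \mathbb{E}_\mu[\tilde f(p,q) \mid \sigma(g)]$. Integrating over $\mu$ and using the tower property yields $\mathrm{D}_f(g_*P, g_*Q) \le \int_A \tilde f(p,q)\, d\mu = \mathrm{D}_f(P,Q)$.

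The main obstacle is the careful treatment of the boundary term. I must verify that the lower-semicontinuous extension of $\tilde f$ to $\{y=0\}$ is genuinely jointly convex and that the (vector-valued) conditional Jensen inequality remains valid there, including on the $\mu$-nonnull sets where $p$ or $q$ vanishes; this is exactly what makes the $f'(\infty) P(\{q=0\})$ contribution in the definition interact correctly with the inequality. Secondary care is needed to ensure all integrals are well defined in $[-\infty, +\infty]$ (using $f(1)=0$ and convexity to bound $\tilde f$ below by an affine function) so that the manipulations are legitimate even before finiteness of the right-hand side is assumed.
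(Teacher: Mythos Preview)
The paper does not actually prove \cref{prop:dpi-f}; it is stated as a citation to \citet[Thm 6.2]{polyanskiy2014lecture} and used as a black box. Your proposal is a correct reconstruction of the standard argument---dominating measure, identification of pushforward densities with conditional expectations under $\sigma(g)$, and conditional Jensen applied to the perspective $\tilde f(x,y)=y f(x/y)$---which is essentially the proof given in the cited reference, so there is nothing to compare against in the paper itself.
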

 From this the analogue of \cref{prop:klinequality} holds for general $f$-divergences.
 
\begin{prop}\label{prop:fdiv-inequality}
Suppose the approximate posterior is parameterized by $(Q_\theta, g)$ and the prior is parameterized by $(P_\theta, g)$. Define $Q=g_*Q_\theta$ and $P=g_*P_\theta$ to be the approximate posterior predictive and the prior predictive respectively. Then for any $f$-divergence $D_f$,
\begin{equation}\label{eqn:dpi-fdiv-fvi}
    \mathrm{D}_{f}(Q, P) \leq \mathrm{D}_{f}(Q_\theta, P_\theta).
\end{equation}
Moreover, if $g$ is injective (each set of parameters corresponds to a unique predictive function) then equality holds in \cref{eqn:dpi-fvi}. 
\end{prop}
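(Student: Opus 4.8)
The plan is to reduce everything to the data processing inequality for $f$-divergences, \cref{prop:dpi-f}, exactly mirroring the argument behind \cref{prop:klinequality}; the only change is that the instance of the DPI invoked is the $f$-divergence version rather than the KL version, so no new idea is needed beyond bookkeeping.

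First I would record the setup: by hypothesis $g:\R^k\to\R^\mathcal{X}$ is measurable, and $Q=g_*Q_\theta$, $P=g_*P_\theta$. Applying \cref{prop:dpi-f} with this $g$ to the pair $(Q_\theta,P_\theta)$ gives immediately $\mathrm{D}_f(g_*Q_\theta,g_*P_\theta)\le\mathrm{D}_f(Q_\theta,P_\theta)$, i.e.\ $\mathrm{D}_f(Q,P)\le\mathrm{D}_f(Q_\theta,P_\theta)$, which is \cref{eqn:dpi-fdiv-fvi}. This step is essentially free.

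For the equality case when $g$ is injective, the idea is to apply the DPI in the opposite direction using a left inverse. If $g$ admits a measurable left inverse $h$ (so that $h\circ g$ is the identity on $\R^k$, at least $P_\theta$- and $Q_\theta$-almost everywhere), then $h_*Q=h_*g_*Q_\theta=(h\circ g)_*Q_\theta=Q_\theta$ and likewise $h_*P=P_\theta$, so \cref{prop:dpi-f} applied to $h$ yields $\mathrm{D}_f(Q_\theta,P_\theta)=\mathrm{D}_f(h_*Q,h_*P)\le\mathrm{D}_f(Q,P)$; combined with the inequality above this forces equality.

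The main obstacle is the measurability of the left inverse $h$: injectivity alone provides only a set-theoretic left inverse, and its measurability with respect to the product $\sigma$-algebra on $\R^\mathcal{X}$ is not automatic. I would handle this by observing that in the cases of interest injectivity is witnessed on a finite index set $A\subset\mathcal{X}$, i.e.\ $\pi_A\circ g$ is already injective (as in the Bayesian linear regression model, where one picks $A=\{a_1,\dots,a_k\}$ with $\{\phi(a_i)\}$ linearly independent). One can then take $h=(\pi_A\circ g)^{-1}\circ\pi_A$, reducing the problem to measurability of the inverse of an injective measurable map on finite-dimensional Euclidean space, which holds by standard results on Borel injections (e.g.\ the Lusin--Souslin theorem), and is elementary when $\pi_A\circ g$ is continuous. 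I would also note, as in the main text, that the converse fails: equality may hold for particular $(Q,P)$ even when $g$ is not injective.
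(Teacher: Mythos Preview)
Your proposal is correct and follows exactly the same route as the paper's proof: apply \cref{prop:dpi-f} to $g$ for the inequality, and to a left inverse for the reverse inequality when $g$ is injective. You are in fact more careful than the paper, which simply asserts the existence of a left inverse $g'$ and applies the DPI to it without addressing its measurability; your remarks on finite-dimensional witnesses and Lusin--Souslin fill a gap the paper's own proof leaves open.
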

\begin{proof}
 For the inequality, using \cref{prop:dpi-f},
 \[
 \mathrm{D}_{f}(Q, P) =  \mathrm{D}_{f}(g_*Q_\Theta, g_*P_\Theta) \leq  \mathrm{D}_{f}(Q_\Theta, P_\Theta).
 \]
 Suppose $g$ is injective, then there exists a $g': \R^\mathcal{X} \to \R^k$ such that $g' \circ g(\theta)=\theta$ for all $\theta \in \R^k$ ($g$ has a left inverse). Then,
 \begin{align*}
 \mathrm{D}_{f}(Q, P) &= \mathrm{D}_{f}(g_*Q_\Theta, g_*P_\Theta) 
 \\ 
 &\geq \mathrm{D}_{f}(g'_*(g_*Q_\Theta), g'_*(g_*P_\Theta))\\
 &= \mathrm{D}_{f}((g' \circ g)_*Q_\Theta), (g' \circ g)_*P_\Theta)) \\
 &= \mathrm{D}_{f}(Q_\Theta, P_\Theta)). 
 \end{align*}
\end{proof}

If $P \perp Q$, then $Q(p=0)=1$, so that $\mathrm{D}_f(Q,P)=f'(\infty)$. Note that this value is the same for all $P\perp Q$ (and by the convexity of $f$ is the maximum value that can be obtained by the $f$-divergence), so that if all $Q \in \variationalfamily$ are mutually singular to $P$, then any $f$-divergence is entirely independent of which $Q \in \variationalfamily$ is selected. 

This leads to the generalizations of \cref{prop:bnn-diff-widths,prop:param-gp-kl} for other $f$-divergences:

\begin{prop}\label{prop:bnn-diff-widths-fdiv}
Suppose the approximate posterior is parameterized by $(Q_{\Theta_1},g_1)$ and the prior is parameterized by $(P_{\Theta_2},g_2)$, where both $Q_\theta$ and $P_\theta$ have densities (with respect to Lebesgue measure). Further suppose that $g_1(\Theta_1)$ is the mapping defined by a 1HL BNN with ReLU activation functions, $k$ neurons and parameters $\Theta_1$ and that $g_2(\Theta_2)$ is defined similarly, but with $j\neq k$ neurons. Let $Q=g_{1*}Q_{\Theta_1}$ and $P=g_{2*}P_{\Theta_2}$ denote the approximate posterior predictive and the prior predictive respectively, then
\[
\mathrm{D}_{KL}(Q,P)=f'(\infty).
\]
\end{prop}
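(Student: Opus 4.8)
The plan is to reduce the claim to mutual singularity of the two predictive measures and then invoke the observation recorded just before the proposition, namely that $\mathrm{D}_f(Q,P) = f'(\infty)$ whenever $Q \perp P$. So the proof splits into two parts: (i) show that under the stated hypotheses $Q \perp P$ as measures on $\R^{\mathcal{X}}$ with the product $\sigma$-algebra; and (ii) conclude. Step (ii) is immediate from the paragraph above the proposition: if $Q \perp P$ then $Q(\{p=0\})=1$, so the second term in the definition of $\mathrm{D}_f(Q,P)$ equals $f'(\infty)\,Q(\{p=0\}) = f'(\infty)$ while the integral term over $\{p>0\}$ contributes nothing (there $q=0$ $P$-a.e., and this is killed by the normalization of $f$); the same argument with $P$ and $Q$ exchanged gives $\mathrm{D}_f(P,Q)=f'(\infty)$, and by convexity of $f$ this is the largest value an $f$-divergence can take, so the conclusion is the strongest possible.

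For step (i) I would re-use, essentially verbatim, the argument underlying \cref{prop:bnn-diff-widths}: the conclusion $\KL(Q,P)=\KL(P,Q)=\infty$ established there is obtained precisely by producing a measurable event witnessing $Q \perp P$. Spelled out, a single-hidden-layer ReLU network with $m$ neurons, $x \mapsto c + \sum_{i=1}^{m} v_i\,\sigma(w_i\transpose x + b_i)$ with $\sigma(t)=\max(0,t)$, is continuous and piecewise linear, with its non-differentiability set contained in the union of the $m$ activation hyperplanes $\{w_i\transpose x + b_i = 0\}$. Because $Q_\theta$ and $P_\theta$ have Lebesgue densities, the degenerate parameter vectors — those with some $v_i=0$, some $w_i=0$, or two activation hyperplanes coinciding — form a Lebesgue-null set and therefore carry no probability mass; hence, almost surely, a function sampled from the $m$-neuron model is non-differentiable on a union of \emph{exactly} $m$ distinct hyperplanes (exactly $m$ kinks when $d=1$). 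Taking $E \subset \R^{\mathcal{X}}$ to be the set of functions whose non-differentiability locus is a union of exactly $k$ hyperplanes, we get $Q(E)=1$ and, since $j \neq k$, $P(E)=0$ (a $j$-neuron network has at most $j$, and generically exactly $j$, such hyperplanes). Thus $E$ witnesses $Q\perp P$ and $E^c$ witnesses $P\perp Q$.

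The one genuinely delicate point — and where essentially all the work in the cited proof sits — is checking that $E$ belongs to the product $\sigma$-algebra on $\R^{\mathcal{X}}$, which for uncountable $\mathcal{X}$ is strictly coarser than the Borel $\sigma$-algebra of the product topology, so ``counting hyperplanes of non-differentiability'' must be phrased through countably many coordinate evaluations. This is exactly what the topological lemmas of \cref{app:bnn-proof} are for: one expresses the event via the behaviour of the function on a fixed countable dense subset of $\mathcal{X}$ and applies \cref{lem:countable-closed-map} to the continuous parameters-to-function map (whose domain is a countable union of compacts), concluding that the relevant sets are countable unions of closed sets and hence measurable. Since the present proposition adds nothing to \cref{prop:bnn-diff-widths} beyond the passage from the KL divergence to a general $f$-divergence, which is handled by step (ii), I would simply cite that proof for the measurability verification rather than redo it.
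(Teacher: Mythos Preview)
Your proposal is correct and matches the paper's approach exactly: the paper does not give a separate proof of this proposition but states it as an immediate corollary of the mutual singularity $Q\perp P$ established in the proof of \cref{prop:bnn-diff-widths} together with the observation (recorded in the paragraph just above the statement) that $Q\perp P$ forces $\mathrm{D}_f(Q,P)=f'(\infty)$. Your summary of the singularity argument and of where the measurability work sits is accurate.
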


\begin{prop}\label{prop:param-gp-fdiv}
Let $(Q_\Theta,g)$ parameterize the approximate posterio and $P$ be a (non-degenerate) Gaussian process. Assume that $g(\cdot)(x)$ is locally Lipschitz for all $x$. Let $Q=g_*Q_\theta$. Then, 
\begin{equation}
\mathrm{D}_f(Q,P) = f'(\infty) \text{\quad and \quad} \mathrm{D}_f(P,Q) = f'(\infty). 
\end{equation}
\end{prop}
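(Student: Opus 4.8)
The plan is to reduce the whole statement to the single assertion that $Q$ and $P$ are mutually singular, $Q\perp P$, and then quote the observation recorded just above \cref{prop:bnn-diff-widths-fdiv}: if $Q\perp P$ then $\mathrm{D}_f(Q,P)=f'(\infty)$. Since mutual singularity is symmetric, applying that observation with the roles of $P$ and $Q$ interchanged gives $\mathrm{D}_f(P,Q)=f'(\infty)$ as well. So the entire content of the proposition is the claim $Q\perp P$. Note this is strictly stronger than what \cref{prop:param-gp-kl} asserts (mutual singularity implies both $\KL$'s are infinite), so the argument below also re-proves \cref{prop:param-gp-kl}; I now recall it.

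Because $P$ is non-degenerate, I would pick a finite set $A\subset\mathcal{X}$ with $|A|=m>k$ such that the marginal $P_A=\pi_{A*}P$ on $\R^A\cong\R^m$ has full-rank covariance, and hence admits a density with respect to Lebesgue measure $\lambda_A$; in particular $P_A\ll\lambda_A$. On the other hand, $Q_A=\pi_{A*}Q=\pi_{A*}g_*Q_\Theta=(\pi_A\circ g)_*Q_\Theta$, where $\pi_A\circ g:\R^k\to\R^m$ has coordinate maps $\theta\mapsto g(\theta)(a)$, $a\in A$, each locally Lipschitz by hypothesis; hence $\pi_A\circ g$ is locally Lipschitz, in particular continuous. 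Set $N:=(\pi_A\circ g)(\R^k)\subset\R^m$. Viewing $\pi_A\circ g$ as the restriction to the coordinate subspace $\R^k\times\{0\}$ of a locally Lipschitz map $\R^m\to\R^m$, and using that (locally) Lipschitz maps carry Lebesgue-null sets to Lebesgue-null sets (equivalently, do not raise Hausdorff dimension) together with the $\sigma$-compactness of $\R^k$, we get $\lambda_A(N)=0$. Moreover $N$ is a countable union of closed sets by \cref{lem:countable-closed-map}, hence Borel. Therefore $Q_A(N)=Q_\Theta\big((\pi_A\circ g)^{-1}(N)\big)=Q_\Theta(\R^k)=1$ while $P_A(N)=0$, so $N$ witnesses $Q_A\perp P_A$; \cref{lem:proj-singular} then lifts this to $Q\perp P$. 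Combined with the opening reduction, this yields $\mathrm{D}_f(Q,P)=\mathrm{D}_f(P,Q)=f'(\infty)$.

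The hard part will be the middle paragraph, specifically handling that $g$ is only \emph{locally} Lipschitz: one must legitimately decompose $\R^k$ into countably many pieces on which $\pi_A\circ g$ is genuinely Lipschitz, invoke the standard fact that Lipschitz images of null sets are null so that $N$ is $\lambda_A$-null, and confirm that $N$ is measurable via \cref{lem:countable-closed-map}. Everything else is bookkeeping: non-degeneracy hands us the density of $P_A$, the dimension count $k<m$ does the real work, and the evaluation $\mathrm{D}_f=f'(\infty)$ for mutually singular measures is already recorded in the text.
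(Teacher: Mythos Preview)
Your proposal is correct and follows essentially the same route as the paper: the paper's proof of \cref{prop:param-gp-kl} already establishes $Q\perp P$ by projecting to a finite $A$ with $|A|=k+1$, showing $E=(\pi_A\circ g)(\R^k)$ is Borel via \cref{lem:countable-closed-map} and Lebesgue-null via \cref{lem:local-lipshitz} (extending $\pi_A\circ g$ to $\tilde g_A:\R^{k+1}\to\R^A$ exactly as you describe), and then lifts via \cref{lem:proj-singular}; \cref{prop:param-gp-fdiv} simply combines this mutual singularity with the observation that $P\perp Q$ forces $\mathrm D_f=f'(\infty)$. The only cosmetic difference is that the paper takes $|A|=k+1$ and phrases the last step via $P_A\sim\lambda_A$ and \cref{lem:singular-equiv}, whereas you allow any $m>k$ and use $P_A\ll\lambda_A$ directly---these are equivalent.
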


By choosing $f(z)=z\log(z)$, we obtain the KL divergence used above. In this case $f'(\infty)=\lim_{z \to 0^+} \log(1/z) = \infty$.

\section{ReLU BNNs are mutually singular}\label{app:bnn-proof}

\begin{restatable}[]{prop}{bnns}\label{prop:bnn-diff-widths}
Suppose the approximate posterior is parameterized by $(Q_{\Theta_1},g_1)$ and the prior is parameterized by $(P_{\Theta_2}, g_2)$, where both $Q_{\Theta_1}$ and $P_{\Theta_2}$ have densities (with respect to the appropriate Lebesgue measures). Further suppose that $g_1(\Theta_1)$ is the mapping defined by a 1HL BNN with ReLU activation functions, $k$ neurons and parameters $\Theta_1$ and $g_2(\Theta_2)$ is defined similarly, but with $j \neq k$ neurons. Let $Q=g_{1*}Q_{\Theta_1}$ and $P=g_{2*}P_{\Theta_2}$ denote the approximate posterior predictive and the prior predictive respectively, then $\mathrm{D}_{KL}(Q,P)= \infty$.
\end{restatable}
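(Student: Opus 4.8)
The plan is to prove the stronger statement that $Q$ and $P$ are mutually singular as measures on $(\R^\mathcal{X},\Sigma)$; this suffices, because $Q \perp P$ forces $Q \not\ll P$ (both are probability measures, so $Q$ cannot vanish on a set and on its complement), and hence $\KL(Q,P)=\infty$ by definition. Since mutual singularity is symmetric, I may assume without loss of generality that $k > j$.

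The core idea is to look at the restriction of each network to a fixed affine line $L = \{x_0 + tu : t \in \R\} \subseteq \mathcal{X}$, for some fixed $u \neq 0$. Writing a $1$HL ReLU network as $\hat f(x) = c + \sum_i v_i \max(0,w_i^\transpose x + b_i)$, its restriction to $L$ is the map $t \mapsto c + \sum_i v_i \max\bigl(0,(w_i^\transpose u)(t - t_i)\bigr)$ with $t_i = -(w_i^\transpose x_0 + b_i)/(w_i^\transpose u)$ (neurons with $w_i^\transpose u = 0$ being constant along $L$); this is continuous and piecewise linear in $t$ with at most one breakpoint per neuron. First I would record the deterministic half: the restriction of \emph{any} $j$-neuron network to $L$ is piecewise linear with at most $j$ breakpoints. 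Then the probabilistic half: because $Q_{\Theta_1}$ has a Lebesgue density, each of the events $\{w_i^\transpose u = 0\}$, $\{v_i = 0\}$ and $\{t_i = t_{i'}\}$ (for $i\neq i'$) is null — the last by Fubini, since on the complement of $\{w_i^\transpose u=0\}$ the quantity $t_i$ is a nonconstant affine function of $b_i$ — so $Q$-almost surely the restriction of the $k$-neuron network to $L$ has exactly $k$ distinct breakpoints, at each of which the slope genuinely changes (the slope jumping by $\pm v_i(w_i^\transpose u)\neq 0$). In particular it is $Q$-a.s.\ not equal to any piecewise-linear function with at most $j < k$ breakpoints.

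To turn this into a witnessing set inside the product $\sigma$-algebra, I would take $A = \{x_0 + qu : q \in \Q\}$, a countable dense subset of $L$, so that $\R^A$ has product $\sigma$-algebra equal to its Borel $\sigma$-algebra. Let $\mathcal{R}_j \subseteq \R^A$ be the set of $G$ that arise as the restriction to $A$ of a continuous piecewise-linear function $\R\to\R$ with at most $j$ breakpoints. Parametrizing such functions by breakpoint locations $s_1 \le \dots \le s_j$, a value, and $j+1$ slopes gives a continuous surjection onto $\mathcal{R}_j$ from the (closed, hence $\sigma$-compact) set $\{s_1\le\dots\le s_j\}\times\R^{j+2}\subseteq\R^{2j+2}$, with $\R^A$ given the product (Hausdorff) topology; applying \cref{lem:countable-closed-map} to this map and the whole of its domain, $\mathcal{R}_j$ is a countable union of closed sets, hence Borel, hence product-measurable. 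Set $E = \pi_A^{-1}(\mathcal{R}_j)$, which lies in the product $\sigma$-algebra on $\R^\mathcal{X}$ by measurability of $\pi_A$. The deterministic half gives $P(E)=1$; and if the $k$-neuron network's values on $A$ were in $\mathcal{R}_j$, then — two continuous functions agreeing on the dense subset $A$ must agree on all of $L$ — its restriction to $L$ would be piecewise linear with at most $j$ breakpoints, contradicting the probabilistic half, so $Q(E)=0$. Thus $Q(E)=0$ and $P(E^c)=0$, i.e.\ $Q\perp P$, and $\KL(Q,P)=\infty$.

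The step I expect to be most delicate is the measurability of $\mathcal{R}_j$: described naively it is an uncountable union over breakpoint configurations, and the point of passing to the rational points of the line (rather than all of $L$) and invoking \cref{lem:countable-closed-map} is precisely to stay inside the product $\sigma$-algebra, since for uncountable index sets closed sets of the product topology need not be product-measurable. The remaining ingredients — nullity of the genericity events for the $k$-neuron network, and the fact that a continuous function on a line is determined by its values on a dense subset — are routine.
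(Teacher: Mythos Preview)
Your proposal is correct and follows essentially the same route as the paper: both restrict to a one-dimensional slice of the input space (you take an arbitrary line $L$, the paper takes the first coordinate axis), pass to the rational points of that slice and invoke \cref{lem:countable-closed-map} with a continuous finite-dimensional parametrization to show that the set of functions with at most a given number of linear pieces is product-measurable, and then use absolute continuity of the parameter law to rule out coincident breakpoints and zero slope-jumps. The only differences are cosmetic (an arbitrary direction $u$ versus a coordinate axis, and counting breakpoints rather than linear pieces).
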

\subsection{Preliminary Definitions and Lemmas}
We will construct a measurable event which one neural network assigns probability $1$ to and the other probability $0$. This event will roughly be functions $f \in \R^\mathcal{X}$ such that $f(x, 0, \dotsc, 0)$ is continuous with $k+1$ linear pieces. There are two steps: constructing an event that is measurable and captures this behavior; and showing that a ReLU network with a distribution over parameters with Lebesgue density and $k$ neurons assigns probability $1$ to this event. 

\begin{defi}\label{def:k-linear-pieces}
Let $f \in \R^\R$, we say \emph{$f$ is continuous with $k$-linear pieces} if there exists an $x_1\leq x_2 \leq \dotsc \leq x_{k-1}$ and $b, a_1, \dotsc, a_k$ such that 
\begin{align}
f(x) = a_{i+1}x + b_i  \textup{\quad for \,} x \in [x_i, x_{i+1})
\end{align}
with the understanding that $x_0= -\infty, x_k = \infty,\,b_0=b$, and where $b_i$ is selected so that the resulting function is continuous for $0<i\leq k$. 
\end{defi}

We say a function $\tilde{f} \in \R^\mathbb{Q}$ is \emph{continuous with $k$-linear pieces} if it can be extended to a function $f \in \R^\R$ that is continuous with $k$-linear pieces.

\begin{prop}
Define $E_k:=\{f \in \R^\R: \pi_{\mathbb{Q}}(f) \text{\, is continuous with $k$-linear pieces}\}$. Then $E_k$ is measurable in the product $\sigma$-algebra on $\R^\R$ induced by the Borel $\sigma$-algebra.
\end{prop}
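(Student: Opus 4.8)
The plan is to transfer the problem to the countable-product space $\R^{\mathbb{Q}}$ and then display the relevant set as a countable Boolean combination of cylinder sets. Set $\tilde E_k := \{\tilde f \in \R^{\mathbb{Q}} : \tilde f \text{ is continuous with } k\text{-linear pieces}\}$. Then $E_k = \pi_{\mathbb{Q}}^{-1}(\tilde E_k)$, and since $\pi_{\mathbb{Q}}$ is measurable between the product $\sigma$-algebras on $\R^\R$ and $\R^{\mathbb{Q}}$ (as recalled above), it is enough to show that $\tilde E_k$ lies in the product $\sigma$-algebra on $\R^{\mathbb{Q}}$ --- i.e.\ that it can be built from countably many sets, each depending on only finitely many of the coordinates $\tilde f(q)$, $q\in\mathbb{Q}$, using countable unions, intersections and complements.

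The first step is to record two families of measurable events indexed by pairs of rationals $a<b$. First, ``$\tilde f$ is affine on $(a,b)\cap\mathbb{Q}$'' is the countable intersection, over rational triples $p<q<r$ in $(a,b)$, of the collinearity identities $(r-p)\,\tilde f(q) = (r-q)\,\tilde f(p) + (q-p)\,\tilde f(r)$; each such identity cuts out a cylinder set. Second, ``$\tilde f$ is uniformly continuous on $(a,b)\cap\mathbb{Q}$'' can be written as $\bigcap_{\varepsilon}\bigcup_{\delta}\bigcap_{p,q}\{\,|\tilde f(p)-\tilde f(q)|\le\varepsilon\,\}$ with $\varepsilon,\delta$ ranging over positive rationals and $p,q$ over rationals in $(a,b)$ with $|p-q|<\delta$. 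Intersecting the second event over all rational $a<b$ yields a measurable set $U_{\mathrm{cts}}$; because a uniformly continuous function on a dense subset extends uniquely by continuity, $\tilde f\in U_{\mathrm{cts}}$ holds precisely when $\tilde f$ is the restriction to $\mathbb{Q}$ of a (necessarily unique) continuous $f:\R\to\R$.

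The second step characterizes ``at most $k$ linear pieces'' without naming the possibly irrational breakpoints. Let $U(\tilde f)\subseteq\R$ be the open set of $x$ admitting rationals $p<x<q$ with $\tilde f$ affine on $[p,q]\cap\mathbb{Q}$, and $C(\tilde f):=\R\setminus U(\tilde f)$, a closed set. A short compactness-and-connectedness argument shows that (i) for rational $a<b$, $(a,b)\subseteq U(\tilde f)$ iff $\tilde f$ is affine on $(a,b)\cap\mathbb{Q}$, and (ii) if $\tilde f\in U_{\mathrm{cts}}$ then its extension $f$ is affine on each connected component of $U(\tilde f)$; consequently $\tilde f\in\tilde E_k$ iff $\tilde f\in U_{\mathrm{cts}}$ and $C(\tilde f)$ has at most $k-1$ points. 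Since $C(\tilde f)$ is closed, ``$|C(\tilde f)|\ge k$'' is equivalent to the existence of $k$ pairwise disjoint rational open intervals on none of which $\tilde f$ is affine, so
\[
\{\tilde f : |C(\tilde f)|\le k-1\} \;=\; \bigcap_{a_1<b_1\le\dots\le a_k<b_k}\ \bigcup_{i=1}^{k}\,\big\{\tilde f\text{ affine on }(a_i,b_i)\cap\mathbb{Q}\big\},
\]
the outer intersection being over the countably many increasing tuples of rationals; this set is measurable by the first step. Intersecting with $U_{\mathrm{cts}}$ shows $\tilde E_k$ is measurable, and pulling back along $\pi_{\mathbb{Q}}$ completes the proof.

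I expect the crux to be the two equivalences in the second step, together with appreciating why the naive route fails: the breakpoints of a continuous piecewise-linear function need not be rational (e.g.\ $x\mapsto|x-\sqrt{2}|$ has its only breakpoint at $\sqrt 2$), so one cannot simply quantify over rational breakpoint locations. The workaround is to detect ``few breakpoints'' indirectly via disjoint rational intervals of non-affinity, and to impose uniform continuity separately --- the latter being essential to exclude functions that are locally affine away from a finite set yet fail to extend continuously because of a jump at an irrational point. Everything else is bookkeeping with countable unions and intersections of cylinder sets.
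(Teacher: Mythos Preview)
Your argument is correct, and it takes a genuinely different route from the paper's. Both proofs begin by passing to $\R^{\mathbb{Q}}$ via the measurable projection $\pi_{\mathbb{Q}}$, but then diverge.

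The paper observes that for a \emph{countable} index set the product $\sigma$-algebra on $\R^{\mathbb{Q}}$ coincides with the Borel $\sigma$-algebra of the product topology, and then exhibits $\tilde E_k$ as the image of a continuous map $\phi:\R^{2k+1}\to\R^{\mathbb{Q}}$ sending a tuple of parameters (intercept, $k$ slopes, first breakpoint, and non-negative gaps) to the corresponding piecewise-linear function restricted to $\mathbb{Q}$. Since $\R^{2k+1}$ is $\sigma$-compact and $\R^{\mathbb{Q}}$ is Hausdorff, the closed mapping lemma yields that $\tilde E_k$ is a countable union of closed sets, hence Borel.

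You instead synthesize $\tilde E_k$ directly as a countable Boolean combination of cylinder sets: you encode ``locally uniformly continuous'' and ``affine on a rational subinterval'' explicitly, and detect ``at most $k-1$ breakpoints'' by quantifying over $k$-tuples of disjoint rational intervals of non-affinity. This avoids any appeal to the topology of $\R^{\mathbb{Q}}$ or to the identification of the two $\sigma$-algebras, and your remark about irrational breakpoints (so that one cannot simply quantify over rational breakpoint locations) correctly identifies why a naive construction fails. The cost is more bookkeeping---your claims (i) and (ii) and the disjoint-interval characterization of $|C(\tilde f)|$ each require a short compactness/overlap argument---whereas the paper's parameterization handles continuity and the breakpoint count in one stroke and shows at once that $\tilde E_k$ is $F_\sigma$. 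Both arguments are sound; yours is the more elementary, the paper's the more compact.
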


\begin{proof}
By the measurability of $\pi_\rationals$, it suffices to show that 
\[\tilde{E}_k:= \{\tilde{f} \in \R^\rationals: \tilde{f} \text{\, is continuous with $k$-linear pieces}\} \] is measurable in the product $\sigma$-algebra on $\R^\rationals$.

We will use the previous lemma to show that  $\tilde{E}_k$ is a countable union of closed sets and is hence measurable in the Borel $\sigma$-algebra induced by the product topology on $\R^\rationals$. As this coincides with the product $\sigma$-algebra for countable products of $\R$ \citep[Exercise 2.4.1.5]{tao2011introduction}, this suffices. 

We will do this by showing that $\tilde{E}_k$ is the image of $\R^{2k+1}$ under a continuous function, and applying \cref{lem:countable-closed-map}. As $\R^{2k+1}$ is closed, $\R^\rationals$ is a product of Hausdorff spaces, hence Hausdorff and $\R^{2k+1} = \bigcup_{i \in \naturals} [-i,i]^{2k+1}$ (i.e~it is a countable union of compact set), all that remains is to construct a continuous $\phi: \R^{2k+1} \to \R^\rationals$ such that $\phi(\R^{2k+1})= \tilde{E}_k$.

Let 
\begin{align}
    \phi(\tilde{s}_0,\tilde{s}_1, \dotsc, \tilde{s}_{2k}) = \tilde{f}^{(\tilde{s}_0,\tilde{s}_1, \dotsc, \tilde{s}_{2k})}
\end{align}
with $\tilde{f}$ defined as in \cref{def:k-linear-pieces} (restricted to $\rationals$) with $b=\tilde{s}_0, a_i = \tilde{s}_i$ and $x_1=\tilde{s}_{k+1}$ and $x_{i+1} = x_i + |\tilde{s}_{i+k+1}|$. From this definition, it is clear that $\phi(\R^{2k+1}) \subset \tilde{E}_k$. The reverse inclusion follows from the noting that any function of the form in \cref{def:k-linear-pieces} can be written in this form. It remains to show $\phi$ is continuous. By the universal property of the product topology \citep[Theorem 19.6]{munkres2000topology}, we need only show $\phi_{\{q\}}$ is continuous for all $q \in \rationals$, which can be shown from the metric space definition of continuity (with some care for cases when $q$ is on the boundary of two linear regions).
\end{proof}

\subsection{Proof in the case when input space is one-dimensional}

We first prove \cref{prop:bnn-diff-widths} under the assumption that $\mathcal{X}=\R$; the generalization to multidimensional inputs is straightforward. 

\begin{proof}
All that remains to show that the implied measures for two 1HL ReLU BNNs mapping from $\R \to \R$ are orthogonal is showing that if a 1HL ReLU BNN has $k$ neurons, then it produces a function in $\tilde{E}_{k+1}\setminus\tilde{E}_k$ with probability 1.

We first show that with probability $1$, the implied function is in $E_{k+1}$ (in fact, this holds surely). Let $w^1, w^2, b^1 \in \R^{k}$ and $b^2 \in \R$ be an arbitrary realization of weights and biases, then $f(x) =b^{(2)} + \langle w^{(2)}, \max(0, w^{(1)} \circ x + b^{(1)})\rangle= b^{(2)}$, where $\circ$ denotes an element-wise vector product. We can rewrite this as
\[
f(x) = b^{(2)} + \sum_{\substack{i=1\\ w_i^{(1)} x +b^{(1)}_i}>0}^k w^{(1)}_iw^{(2)}_ix+ w_2b_1. 
\]
Note that this is piecewise linear, with boundaries at $\tilde{x}_i=-\frac{b_i}{w_i}$ for $i \leq k$, and is continuous in $x$ as it can be written as a composition of continuous functions. As this holds for arbitrary realizations of parameters, $f$ is surely in $E_{k+1}$. 

On the other hand, for $f$ to be in $E_{k}$ it must be the case that either:
\begin{itemize}
    \item $\tilde{x}_i = \tilde{x}_j$ for some $i \neq j$ (i.e. boundaries coincide).
    \item There exists an $i$ such that $w^{(1)}_iw^{(2)}_i=0$ (adjacent regions have the same slope). 
\end{itemize}
The above conditions define a Lebesgue null set; hence under the assumption that the distribution over parameters has density with respect to Lebesgue measure, this is a probability $0$ event.
\end{proof}
\subsection{Extension to multidimensional inputs}
The extension to multidimensional inputs is almost immediate up considering the set $E'_k:=\{f \in \R^{\R^d}: f(\cdot, 0, 0, \dotsc, 0) \text{\, is continuous with\, } $k$ \text{\, linear pieces}\}$. Defining $\tilde{E}'_k:= \{f \in \R^\rationals \times \R^{\R^{d-1}}: f(\cdot, 0, 0, \dotsc, 0) \text{\, is continuous with\, } k \text{\, linear pieces on rational\, } x\}$, we see that $\tilde{E}'_k$ is measurable for the same reason $\tilde{E}_k$ is measurable. Moreover, viewed along this slice of input space, the neural network is identical to a 1HL network mapping from $\R \to \R$, so the proof in the previous subsection holds without modification.

\section{Parametric models and non-degenerate Gaussian measures are mutually singular}\label{app:param-gp}

\subsection{Preliminaries}

The main ingredient in the result is the following lemma:

\begin{lem}[{\citet[Lemma 7.25]{rudin1966real}}]\label{lem:local-lipshitz}
Let $E \subset \R^k$ a Lebesgue null set. Suppose $f: \R^k \to \R^k$ satisfies, for all $x \in E$ there exists a $\delta>0$ and $M>0$ such that
\[
\frac{\|f(x)-f(y)\|}{\|x-y\|} \leq M
\]
for all $y \in E \cap B(x; \delta)$ where $B(x; \delta)$ denotes the ball of radius $\delta$ centered at $x$. Then $\lambda_{k}(f(E))=0$.
\end{lem}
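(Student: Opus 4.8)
The plan is to reduce the statement to the classical fact that a map which is \emph{genuinely} Lipschitz on a set carries null subsets of that set to null sets, and to reach a situation where that fact applies by chopping $E$ into countably many pieces on which the purely pointwise Lipschitz bound in the hypothesis becomes a uniform one.

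First I would stratify $E$ by the Lipschitz constant and by the scale at which the bound takes effect. For $n,j \in \naturals$ set
\[
E_{n,j} = \cb{\, x \in E : \|f(x)-f(y)\| \le n\,\|x-y\| \ \text{for all}\ y \in E \cap B(x;1/j) \,}.
\]
For each $x \in E$ the hypothesis supplies some $\delta>0$ and $M>0$; taking an integer $n \ge M$ and an integer $j \ge 1/\delta$ puts $x$ into $E_{n,j}$, so $E = \bigcup_{n,j} E_{n,j}$. Next I would refine each $E_{n,j}$ so that any two of its points lie within distance $1/j$: fix a grid of half-open cubes $C_l$ of side length $1/(2j\sqrt{k})$ (so diameter below $1/j$), there are countably many, and set $E_{n,j,l} = E_{n,j} \cap C_l$. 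For $x,y \in E_{n,j,l}$ we have $\|x-y\| < 1/j$, hence $y \in E \cap B(x;1/j)$ and therefore $\|f(x)-f(y)\| \le n\|x-y\|$; that is, $f$ is honestly $n$-Lipschitz on $S := E_{n,j,l}$, and $\lambda_k(S)=0$ because $S \subseteq E$ and Lebesgue measure is complete.

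It then remains to prove that if $f$ is $n$-Lipschitz on a set $S$ with $\lambda_k(S)=0$ then $\lambda_k(f(S))=0$, after which countable subadditivity of outer Lebesgue measure over the countable family $\{E_{n,j,l}\}$ gives $\lambda_k(f(E)) \le \sum \lambda_k(f(E_{n,j,l})) = 0$. For the remaining fact I would use that a Lebesgue-null set can be covered, for every $\varepsilon>0$, by countably many balls $B(x_i;r_i)$ with centres $x_i \in S$ and $\sum_i r_i^{k} < \varepsilon$ (this is the comparison of $\lambda_k$ with $k$-dimensional Hausdorff measure, or a Vitali/Besicovitch covering argument; one may alternatively extend $f|_S$ to a globally $n$-Lipschitz map via Kirszbraun's theorem and drop the requirement that the centres lie in $S$). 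Since $f$ is $n$-Lipschitz on $S$, $f(S \cap B(x_i;r_i)) \subseteq B(f(x_i); n r_i)$, so $f(S) \subseteq \bigcup_i B(f(x_i); n r_i)$ has outer measure at most $n^{k}\omega_k \sum_i r_i^{k} \le n^{k}\omega_k\varepsilon$, with $\omega_k = \lambda_k(B(0;1))$; letting $\varepsilon \to 0$ gives $\lambda_k(f(S))=0$. Note $f(S)$ and $f(E)$ need not be Borel, but a set of outer measure zero is automatically Lebesgue measurable with measure zero, so the conclusion is meaningful as stated.

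The main obstacle is the last step: converting the purely pointwise Lipschitz hypothesis into a quantitative bound on the measure of the image. Two ingredients make it work: the decomposition $E = \bigcup_{n,j,l} E_{n,j,l}$, which upgrades the pointwise bound to a uniform Lipschitz bound on each piece — and here it is essential that each cube $C_l$ has diameter strictly below $1/j$, so that the scale restriction $y \in B(x;1/j)$ is discharged for free — and the covering of a null set by small balls with centres in the set, which is the point where one must invoke a genuine covering lemma (or Kirszbraun) rather than argue naively. Everything else is routine bookkeeping with countable unions and outer measure.
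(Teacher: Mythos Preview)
The paper does not prove this lemma; it merely cites \citet[Lemma~7.25]{rudin1966real} and uses the result as a black box in the proof of \cref{prop:param-gp-kl}. There is therefore no paper proof to compare against. Your argument is correct and is essentially the standard proof one finds in Rudin: stratify $E$ by the local Lipschitz constant and the scale at which it holds, refine by small cubes so that the pointwise bound becomes a genuine Lipschitz bound on each piece, and then invoke that Lipschitz maps send null sets to null sets.

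One minor comment: in the last step you make life slightly harder than necessary by insisting that the covering balls have centres in $S$, which is what forces you to reach for Vitali/Besicovitch or Kirszbraun. Since you have already established that $f$ is $n$-Lipschitz \emph{between any two points of} $S=E_{n,j,l}$, you can cover $S$ by arbitrary cubes or balls $B_i$ with $\sum_i \lambda_k(B_i)<\varepsilon$; then $f(S\cap B_i)$ has diameter at most $n\cdot\mathrm{diam}(B_i)$ and hence outer measure at most $c_k\, n^k \lambda_k(B_i)$ for a dimensional constant $c_k$. Summing and letting $\varepsilon\to 0$ finishes the argument without any covering lemma or extension theorem.
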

Note that the condition on $f$ holds if $f$ is locally Lipschitz. Moreover, if a function from $\R^k \to \R^k$ is locally Lipschitz for every output index, it is locally Lipschitz as a function from $\R^k \to \R^k$.

\subsection{Statement and Proof}
\parGPs*

\begin{proof}
Fix a set $A \subset \mathcal{X}, |A|=k+1$ so that $\pi_{A*}P \sim \lambda_{A}$. Such an $A$ exists by the assumption that $P$ is non-degenerate and $\mathcal{X}$ is infinite. By \cref{lem:singular-equiv} and \cref{lem:proj-singular} it then suffices to show $\pi_A Q \perp \lambda_{A}$. 

Define the event $E=(\pi_A \circ g)(\R^k)$. As $\pi_A \circ g$ is continuous, by the closed mapping lemma, $E$ is a countable union of closed sets (\cref{lem:countable-closed-map}), hence Borel measurable. 
Also,
\begin{align*}
    \pi_{A_*}Q(E) = Q_\theta(g^{-1}(\pi_A^{-1}(\pi_A(g(\R^k))))) \geq Q_\theta(g^{-1}(g(\R^k))) \geq Q_\theta(\R^k)=1.
\end{align*}
The inequalities follow from the pre-image of the image of a set containing the original set. 

All that is left to show is that $E$ is a Lebesgue null set. Let $\tilde{g}_A:\R^{k+1} \to \R^A$ be defined by $\tilde{g}_A(x_1,\dots, x_{k+1})=(\pi_A \circ g)(x_1, \dots, x_k)$. Then $ \tilde{g}_A(\R^{k} \times \{0\})=E$. As $\pi_A \circ g$ is locally Lipschitz, so is $\tilde{g}_A$. The proof is then completed by \cref{lem:local-lipshitz}, noting that $\R^{k} \times \{0\}$ is a Lebesgue null set in $\R^{k+1}$.
\end{proof}

\section{Bayesian Linear Regression and Variational Inference}\label{app:blr-elbo}

Consider the Bayesian linear regression model \cref{eqn:blr} and the modified variational objective,
\[
\max_{Q \in \variationalfamily} \mathbb{E}_{\Theta \sim Q}[\log p(y|x,\Theta)] - \KL(Q_A, P_A), \]
which is a special case of \cref{eqn:modified-elbo} when $\mu$ is a point mass on $A$ (i.e.~\algname{FixedA}). Let $\datafeatures$ be the $n \times k$ feature matrix with $[\datafeatures]_{ij} = \phi_j(x_i)$. Let $A=\{a_1,\cdots, a_m\}$ and $\Afeatures$ be the $m \times k$ feature matrix with $[\Afeatures]_{ij} = \phi_j(a_i)$. Without loss of generality, we assume that $\Afeatures$ has linearly independent rows, so that $Q_A,P_A$ are non-degenerate. Then,
\[
\log p(y|x, \Theta) = -\frac{n}{2} \log (2\pi \sigma^2)^{-n/2} - \frac{1}{2 \sigma^2}(y - \datafeatures \Theta)\transpose (y - \datafeatures W)
\]
For $Q = \mathcal{N}(\mu_Q, \Sigma_Q)$, we have
\begin{align*}
    \mathbb{E}_{W\sim Q}[\log p(y|W)] &=  -\frac{n}{2} \log (2\pi \sigma^2) - \frac{1}{2\sigma^2}\mathbb{E}_{W\sim Q}[(y - \datafeatures W)\transpose (y - \datafeatures W)] \\
    & = -\frac{n}{2} \log (2\pi \sigma^2) - \frac{1}{2\sigma^2}\left(y\transpose y - 2y\transpose \datafeatures \mathbb{E}[W] +\mathbb{E}[W\transpose \datafeatures\transpose\datafeatures W] \right) \\ 
    & = -\frac{n}{2} \log (2\pi \sigma^2) - \frac{1}{2\sigma^2}\left(y\transpose y - 2y\transpose \datafeatures \mu_Q +\mathbb{E}[W\transpose \datafeatures\transpose\datafeatures W] \right) \\
    & = -\frac{n}{2} \log (2\pi \sigma^2) - \frac{1}{2\sigma^2}\left(y\transpose y - 2y\transpose \datafeatures \mu_Q +\Tr(\datafeatures\transpose\datafeatures \Sigma_Q) + \mu_Q\transpose\datafeatures\transpose\datafeatures \mu_Q \right) \\
    & = -\frac{n}{2} \log (2\pi \sigma^2) - \frac{1}{2\sigma^2}\left((y- \datafeatures \mu_Q)\transpose(y- \datafeatures \mu_Q) +\Tr(\datafeatures\transpose\datafeatures \Sigma_Q) \right).
\end{align*} 
The gradient of this term with respect to both $\mu_Q$ is,
\[
\nabla_{\mu_Q} \mathbb{E}_{W\sim Q}[\log p(y|W)] = \frac{1}{\sigma^2}\datafeatures\transpose (y- \datafeatures \mu_Q).
\]

 We now turn out attention to the KL divergence between $Q_A$ and $P_A$. We assume $P= \mcN(0, I)$. We then have $Q_A = \mcN(\Afeatures \mu_Q, \Afeatures \Sigma_Q \Afeatures \transpose)$ and  $P_A = \mcN(0, \Afeatures \Afeatures \transpose)$. The KL divergence is then,
 \begin{align*}
     \KL(Q_A,P_A) = \frac{1}{2}\Bigg(-m + \mu_Q\transpose\Afeatures\transpose(\Afeatures\Afeatures \transpose)^{-1}&\Afeatures \mu_Q  + \Tr( (\Afeatures \Afeatures \transpose)^{-1} \Afeatures \Sigma_Q \Afeatures \transpose) \\ &-  \log \Big\lvert(\Afeatures\Afeatures \transpose)^{-1} \Afeatures \Sigma_Q \Afeatures \transpose\Big\rvert \Bigg),
 \end{align*}
 and its gradient with respect to $\mu_Q$ is, 
 \[
 \nabla_{\mu_Q} \KL(Q_A,P_A) = \Afeatures\transpose(\Afeatures\Afeatures \transpose)^{-1}\Afeatures \mu_Q.
 \]
Note that $\Afeatures\transpose(\Afeatures \Afeatures \transpose)^{-1}\Afeatures$ is the projection of $\mu_Q$ onto the column space of $\Afeatures$.

We can find the optimal $\mu_Q$ solutions to our optimization problem by setting gradient equal to $0$. This yields,
\[
 \frac{1}{\sigma^2}\datafeatures\transpose y-  \frac{1}{\sigma^2}\datafeatures\transpose\datafeatures \mu_Q - \Afeatures\transpose(\Afeatures \Afeatures \transpose)^{-1}\Afeatures\mu_Q = 0.
\]
Rearranging,
\[
\left(\frac{1}{\sigma^2}\datafeatures\transpose\datafeatures + \Afeatures\transpose(\Afeatures \Afeatures \transpose)^{-1}\Afeatures\right) \mu_Q =  \frac{1}{\sigma^2}\datafeatures\transpose y . 
\]
A solution for $\mu_Q$ (though this solution is not in general unique) is given by,
\[
\mu_Q =  (\datafeatures\transpose\datafeatures + \sigma^2 \Afeatures\transpose(\Afeatures \Afeatures \transpose)^{-1}\Afeatures)^{\dagger}\datafeatures\transpose y.
\]
where we use $\dagger$ to denote the pseudo-inverse.

The maximum likelihood solution could be found by removing the term $\Afeatures\transpose(\Afeatures \Afeatures \transpose)^{-1}\Afeatures$, while map inference is recovered when $\Afeatures\transpose(\Afeatures \Afeatures \transpose)^{-1}\Afeatures=I$. As $\Afeatures\transpose(\Afeatures \Afeatures \transpose)^{-1}\Afeatures$ is a projection matrix, we see this has the intuitive interpretation of regularizing the MLE solution, but only in the directions that effect predictions on the points in $A$.

\section{Experimental Details}\label{app:experimental-details}
In this section, we present details on the experiments we performed.

\subsection{Toy experiment}
We generate the data using 20 linearly-spaced radial basis function features between -2 and 2, with each feature having lengthscale 0.2. The input data are sampled by taking 20 samples from $\mathcal{N}(-1.2, 0.3^2)$ and 20 from $\mathcal{N}(1.2, 0.3^2)$. We sample the weights from a standard normal prior and generate the output data as described earlier.

For each method, we optimize the ELBO using Adam \citep{kingma2014adam} using 5000 gradient steps with a learning rate of 0.01. We evaluate the likelihood term using the full dataset every iteration without minibatching. For the \algname{FixedA}, \algname{RandA}, and \algname{SSGE} methods we use 10 points for $A$, with 5 from selected from the dataset, and the other 5 sampled uniformly from a box with bounds determined by the bounds of the training data.

\subsection{UCI experiments}
We begin by training sparse Gaussian process regression (SGPR) \citep{titsias2009variational} for each split of the dataset we use, using 100 inducing points initialized by sampling from the dataset. We use the GPflow \citep{GPflow2017} implementation of SGPR, initializing the ARD lengthscales to 1 and the noise variance to 1e-4. We use the built-in Scipy optimizer with a maximum of 5000 iterations. After the SGPR model is trained, we use the 100 inducing points to obtain 100 features to use for Bayesian linear regression \citep{wahba1990spline}, with a standard normal prior over the weights. We then train each of the methods for estimating the KL term, keeping the learned noise variance from the SGPR models fixed.

For \algname{Exact}, we simply use the closed-form solutions to the inference problem, which exist for both \algname{Full} and \algname{FFG}. For the algorithms that make use of \cref{eqn:modified-elbo}, we set $|A|=80$ and choose half of its points from the training set (note this differs slightly from \citet{sun2018functional}, who always include the current mini-batch used to evaluate the expected log likelihood term in $A$), and the other half uniformly randomly over the bounding box of the training set. For \algname{FixedA}, we optimize using L-BFGS \citep{nocedal1980updating}, whereas for \algname{RandA} and \algname{SSGE} we optimize using Adam. In each case we optimize for (up to) 15000 steps. We note that we add jitter as necessary to ensure that the Cholesky decompositions we perform succeed. Finally, we average each run over 20 seeds.

Extended results are displayed in figures~\ref{fig:uci-composite-bcek}~and~\ref{fig:uci-composite-npwy}.

\begin{figure}[p]
\floatconts
  {fig:uci-composite-bcek}
  {\caption{Results for datasets \dsetname{boston}, \dsetname{concrete}, \dsetname{energy} and \dsetname{kin8nm}. The different colors represent different sizes of $|A|$ and the minibatch. From left to right: 20, 40, 80, 160.}}
  {%
      \includegraphics[width=\textwidth]{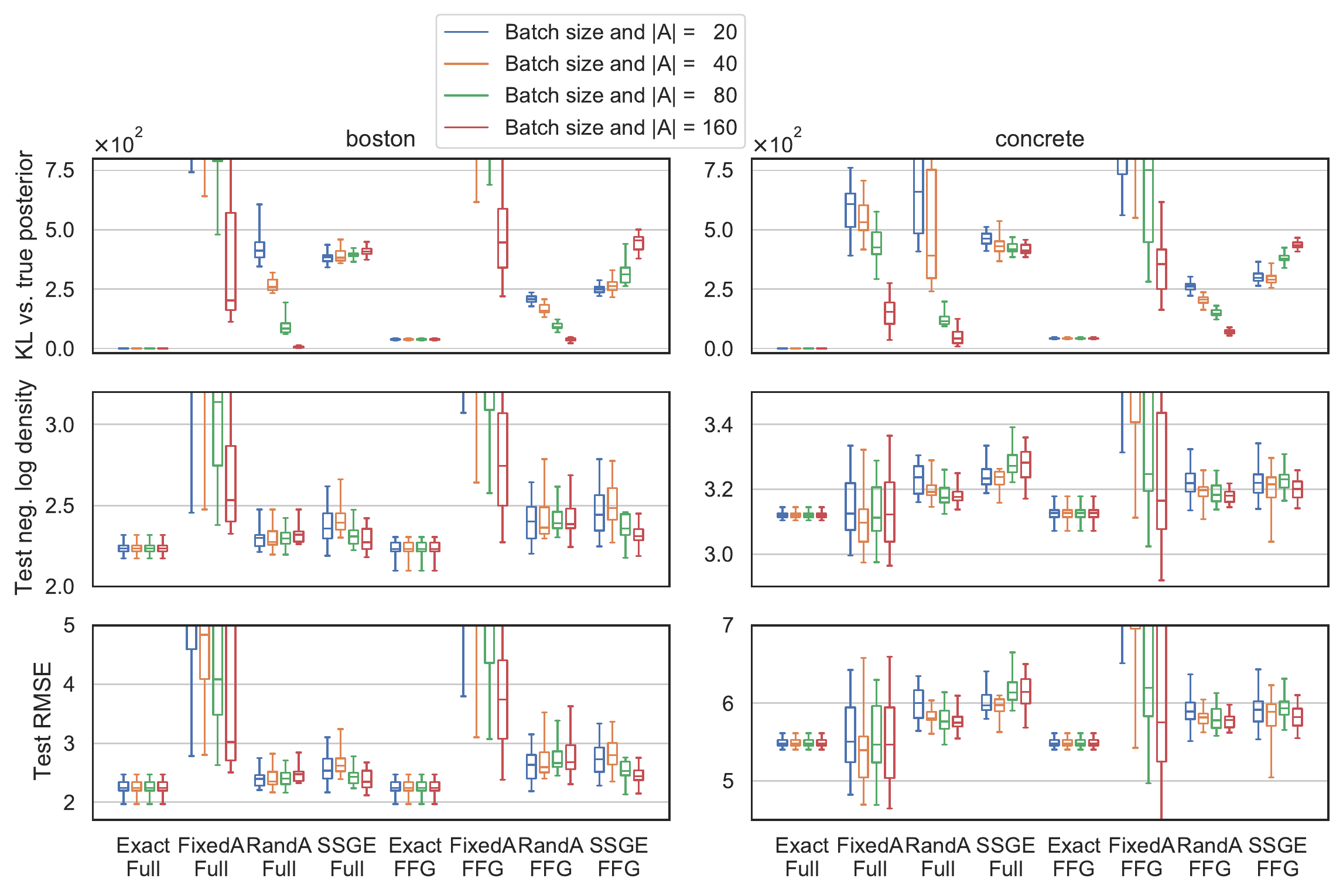}
      \includegraphics[width=\textwidth]{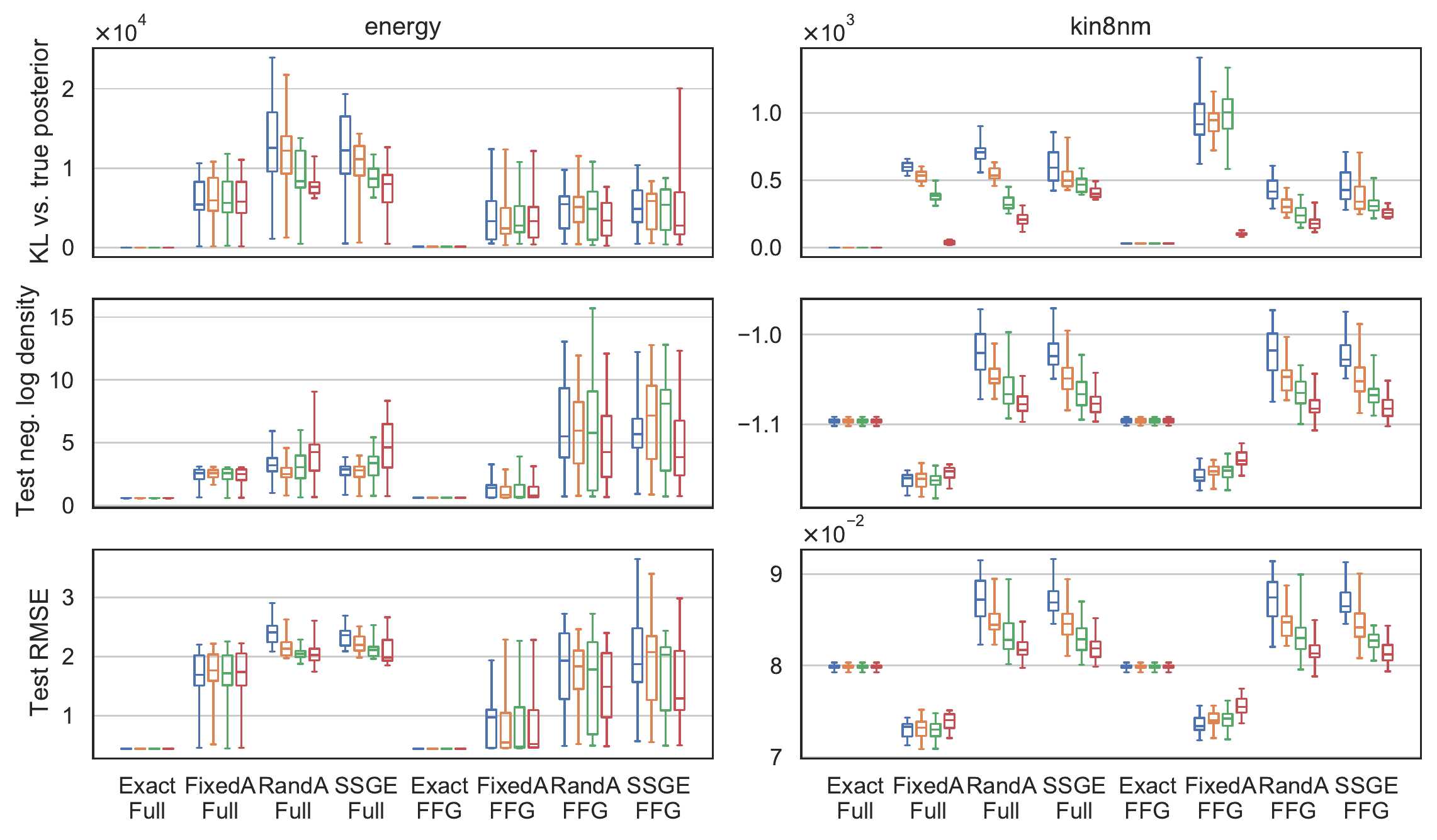}
  }
\end{figure}
\begin{figure}[p]
\floatconts
  {fig:uci-composite-npwy}
  {\caption{Results for datasets \dsetname{naval}, \dsetname{power}, \dsetname{wine} and \dsetname{yacht}. The different colors represent different sizes of $|A|$ and the minibatch. From left to right: 20, 40, 80, 160.}}
  {%
      \includegraphics[width=\textwidth]{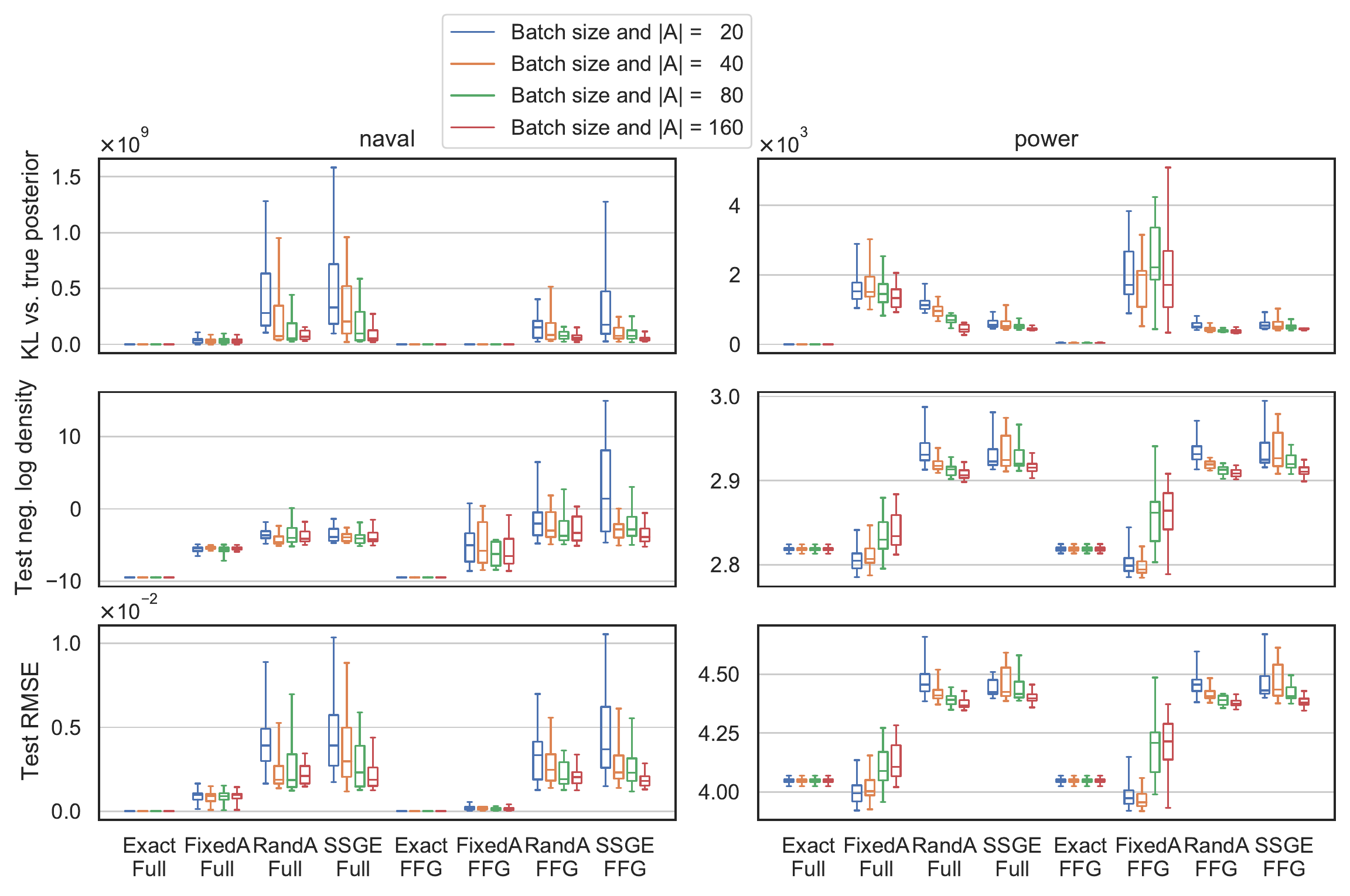}
      \includegraphics[width=\textwidth]{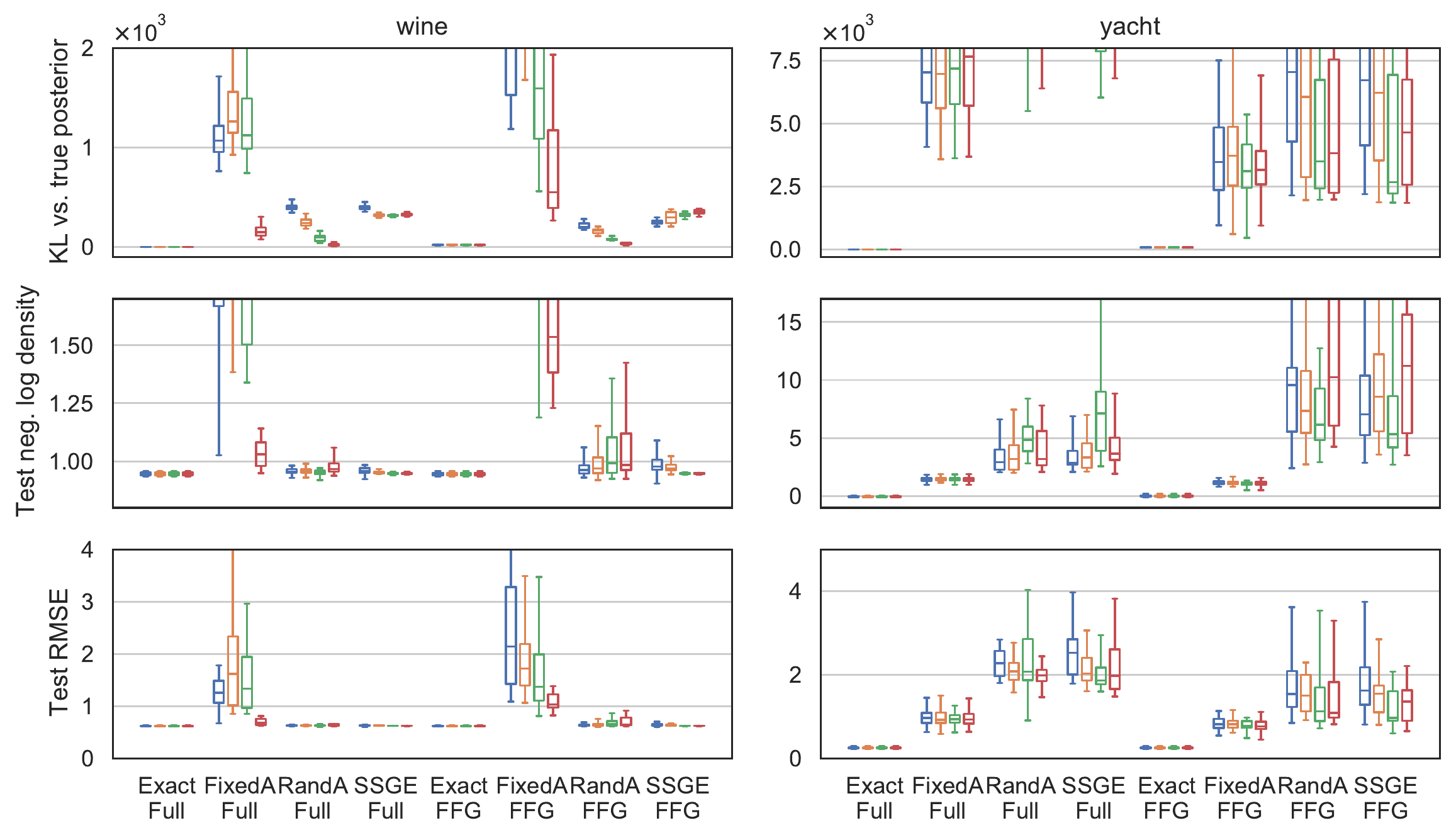}
  }
\end{figure}

\end{document}